\documentclass[11pt]{article}

\usepackage[letterpaper, margin=1in]{geometry}

\usepackage{amsmath, amssymb, amsthm}
\usepackage{tcolorbox}
\usepackage{wrapfig}
\usepackage{blkarray}
\usepackage{booktabs} 
\usepackage{multirow}
\usepackage{thm-restate}
\usepackage{thmtools}
\usepackage{subcaption}

\usepackage{bbm}
\usepackage[utf8]{inputenc} 
\usepackage[T1]{fontenc}    
\usepackage{hyperref}       
\usepackage{url}            
\usepackage{booktabs}       
\usepackage{amsfonts}       
\usepackage{nicefrac}       
\usepackage{microtype}      
\usepackage{xcolor}         

\usepackage[style=numeric, citestyle=numeric-comp, sorting=none, backend=biber, natbib=true, backref=true]{biblatex}

\addbibresource{reference.bib}

\newtheorem{assumption}{Assumption}
\newtheorem{definition}{Definition}
\newtheorem{theorem}{Theorem}
\newtheorem{proposition}[theorem]{Proposition}
\newtheorem{corollary}{Corollary}
\newtheorem{lemma}{Lemma}
\newtheorem{remark}{Remark}

\DeclareMathOperator*{\argmin}{arg\,min}

\DeclareMathOperator{\Tr}{Tr}

\newenvironment{proof*}
  {\par\pushQED{\qed}\normalfont\ignorespaces}
  {\popQED\par}

\title{
    On Regularization via Early Stopping for Least Squares Regression
}

\author{
    Rishi Sonthalia\thanks{
        Department of Mathematics, 
        Boston College, 
        \texttt{rishi.sonthalia@bc.edu}
    }
    \and
    Jackie Lok\thanks{
        ORFE Department, 
        Princeton University, 
        \texttt{jackie.lok@princeton.edu}
    }
    \and
    Elizaveta Rebrova\thanks{
        ORFE Department, 
        Princeton University, 
        \texttt{elre@princeton.edu}
    }
}

\date{}

\begin{document}

\maketitle

\begin{abstract}
    A fundamental problem in machine learning is understanding the effect of early stopping on the parameters obtained and the generalization capabilities of the model. Even for linear models, the effect is not fully understood for arbitrary learning rates and data. In this paper, we analyze the dynamics of discrete full batch gradient descent for linear regression. With minimal distributional assumptions, we characterize the trajectory of the parameters and the expected excess risk. Using this characterization, we show that when training with any learning rate schedule and finite time horizon, the early stopped solution is equivalent to the minimum norm solution for a generalized ridge regression problem. We also prove that early stopping is beneficial for generic data with arbitrary spectrum and for a wide variety of learning rate schedules. We provide an estimate for the optimal stopping time and empirically demonstrate the accuracy of our estimate.
\end{abstract}

\section{Introduction}

Early stopping is a commonly used method to regularize machine learning models, yet our understanding of the properties of models obtained via early stopping is far from complete. Recent work has shown that linear models trained with gradient descent can exhibit grokking, a phenomenon where the model initially overfits and generalizes poorly, only to later achieve better generalization after prolonged training~\cite{levi2024grokking}. As a result, the strategy of early stopping naturally raises a range of important questions, including: (a) \emph{What are the properties of early stopped models?} (b) \emph{How can we decide when to stop training?} (c) \emph{Under what circumstances is early stopping beneficial?}

Whether early stopping is beneficial or not, in terms of the out-of-sample generalization error, can depend on various parameters of the model. For example, for models trained by gradient descent, a crucial parameter is the choice of learning rates (or step sizes). Most prior works on early stopping for gradient descent consider constant step size schedules, frequently in the continuous-time gradient flow regime where the step sizes are assumed to be negligibly small~\cite{yao2007early, raskutti2014early}. However, the gradient flow approach does not explain how to quantify the optimal stopping time under more general and practical learning rate schedules.

Another known perspective views early stopping as inducing a form of $L_2$ regularization. Intuitively, limiting the number of training iterations ensures that the obtained parameters remain relatively close to their initialization. Hence, it is commonly believed that with zero initialization, early stopping induces $L_2$-type regularization, suggesting that the optimal stopping time should scale inversely with the minimum eigenvalue of the sample covariance matrix of the features~\cite{goodfellow2016deep, hu2022early, raskutti2014early}. However, formalizing this intuition has proven to be difficult.

Again, most prior works approach this by considering the continuous-time gradient flow dynamics for linear regression~\cite{SkGoBr1994} and with Gaussian assumptions on the feature matrix. For example, by studying the exact generalization dynamics under gradient flow, Advani et al.~\cite{AdSaSo2020} provide an estimate for stopping after $\Theta\big(\lambda_{\text{mode}}^{-1}\log(1 + \lambda_{\text{mode}})\big)$ iterations, where $\lambda_{\text{mode}}$ is the modal eigenvalue of the sample covariance matrix of the features.
Furthermore, Ali et al.~\cite{ali2019continuous} shows that gradient flow and ridge regression are tightly connected; namely, under the correspondence $\mu = 1/t$ between the ridge regularization parameter $\mu$ and time parameter $t$, the relative risk is always between $1$ and $1.6862$~\cite[Theorems 1 and 2]{ali2019continuous}. The authors also derive a formula for the risk of gradient flow under the Marchenko--Pastur limit with arbitrary covariance~\cite[Theorem 5]{ali2019continuous}.
Finally, \cite{shen2022optimal} provides descriptions of the optimal stopping time for gradient flow, up to constants, that hold with high probability under the more restrictive standard Gaussian model.
However, without the strong assumptions such as gradient flow and Gaussianity, questions such as \emph{when to stop} to optimize generalization error remain open~\cite{AdSaSo2020, ali2019continuous, shen2022optimal, xu2023towards}. 

In this paper, we do not impose the strong assumptions on the model as described above. Instead, we focus on the \emph{non-asymptotic discrete dynamics of gradient descent for linear regression}. This point of view allows us to develop an explicit formula for the parameters of our model after $k$ steps of gradient descent, denoted by $\beta_k$, which can be expressed in a closed form for many different learning rate schedules used in practice. We obtain several novel, more precise answers to the questions above using the explicit expression for $\beta_k$ and the associated generalization error. Specifically:
\begin{enumerate}
    \item \textbf{Exact trajectories.}
    We provide exact formulas for the discrete dynamics of $\beta_k$ obtained when solving the kernel ridge regression problem using gradient descent that make no assumptions on the data, learning rate schedule, or noise distribution (Proposition~\ref{prop:betak}). We state simple expressions for many common learning rate schedules.
    
    \item \textbf{Equivalence to generalized ridge regression.}
    For the ridgeless case, we show that for generic data, learning rate schedules, and stopping time $T$, the solution obtained after $T$ iterations is equivalent to the minimum norm solution for a generalized ridge regression problem (Theorem~\ref{thm:equivalence1}). Additionally, we show that any minimum norm solution to the ridge regression problem can also be obtained via early stopping if we can pick a distinct learning rate in each eigenspace of the sample covariance matrix of the features (Theorem~\ref{thm:equivalence2}).
    
    \item \textbf{Sufficient conditions for early stopping to be beneficial}.
    We provide sufficient conditions (Theorems~\ref{thm:earlystopping} and~\ref{thm:earlystopping-mu}) for when early stopping improves the generalization performance under some general assumptions. Conversely, we also provide sufficient conditions for when early stopping is not beneficial (Theorems~\ref{thm:earlystoppingConverse} and~\ref{thm:earlystopping-mu}). As a corollary, we show that early stopping is beneficial for many common learning rate schedules, independent of the input data distribution, and that it is not beneficial for some other learning rates (Remarks~\ref{rem:grokking} and~\ref{rem:grokking-ridge}).
    
    \item \textbf{Optimal stopping time estimate for ill-conditioned covariance and non-constant step sizes.}
    We propose an estimate for the optimal stopping time for generic data and a large class of learning rate schedules: see~\eqref{eq:when-to-stop-k} in Section~\ref{sec:time-estimate}. Our estimate generalizes a prior estimate from \cite{AdSaSo2020} that only considered constant step size schedules and requires the covariance matrix of the features to be well-conditioned. We numerically verify the accuracy of our estimate on synthetic and real datasets.
\end{enumerate}

\subsection{Related works}

The dynamics of gradient descent and its related variants have been the subject of extensive research in recent years. Without attempting to cover all the relevant literature, we will highlight some key works and approaches below that are most closely related to early stopping and our methodology.

\paragraph{Gradient flow dynamics.}
A common approach to understanding gradient-based methods is to study gradient flow. Prior works such as \cite{SkGoBr1994, AdSaSo2020, ali2019continuous, levi2024grokking, shen2022optimal, boursier2025early} have studied the gradient flow approximation of full batch gradient descent. Other works have also studied the dynamics of stochastic gradient descent or flow~\cite{ali2020sgf, paquette2025homogenization, paquette2022sgd}. Here, we study the discrete dynamics of full batch gradient descent instead of gradient flow, which, in particular, allows us to derive nontrivial results when non-constant step sizes are used.

\paragraph{Generalization error and regularization.}
In this paper, we study the regularization effects of early stopping in terms of the out-of-sample generalization error. There has been a lot of recent work that uses tools from random matrix theory to understand the generalization performance of linear regression with other forms of regularization. Some works studying generalization for the ridge regression problem include~\cite{DobribanWager2018, HastieEtAl2022, kobak2020ridge, yilmaz2022regularization, nakkiran2021optimal, wang2024near, JacotEtAl2020, jacot2020implicit}. One of the surprising results from such work is that the optimal ridge regularization parameter can be negative~\cite{yilmaz2022regularization}. Other works such as \cite{sonthalia2023training, kausiklowrank2024, dhifallah2021noise, cui2023high} study the effects of noise regularization. Recent work~\cite{sonthalia2023under} also studies the problem with both noise and ridge regularization. However, our focus is on the regularization effects of early stopping.

Concurrent work by Stark and Steinerberger~\cite{StarkSteinerberger2025} shows that if the ridge regularization parameter is large enough, then early stopping for gradient descent with constant step sizes is not beneficial, which we have established independently for general step size schedules (see Theorem~\ref{thm:earlystopping-mu} and Remark~\ref{rem:grokking-ridge}). In addition, \cite{StarkSteinerberger2025} elaborates on and proves optimality results for a fully data-driven methodology for estimating the optimal ridge regularization parameter.

\paragraph{Early stopping for statistical inverse problems.}
Early stopping of iterative methods has also been studied in the inverse-problems literature~\cite{blanchard2018early, blanchard2018optimal, hucker2025early, miftachov2025early}, where regularization is essential for stable recovery in ill-posed problems. In particular, early stopping for gradient descent applied to least-squares problems, known as the Landweber iteration in this literature, is analyzed in~\cite{blanchard2018optimal}. Many adaptive stopping rules in this area are based on the \emph{discrepancy principle}~\cite{EnglEtAl1996}, which terminates the iteration when the residual (i.e., training error) is of the same order as the noise level, which is assumed to be known a priori or able to be estimated.

Compared to our setting, the main distinction is that performance in this line of work is typically measured by the recovery error with respect to the underlying signal, or by a data-fidelity quantity such as the residual norm, whereas we focus on the out-of-sample generalization error. Thus, while discrepancy-based stopping rules provide a natural comparison, they are calibrated to a different objective from the one studied here. Moreover, in this literature, the model is typically situated in the underparameterized setting; by contrast, our analysis also applies in overparameterized settings, where the relation between interpolation, spectral regularization, and generalization is especially delicate.

\paragraph{Other related works.}
Understanding the dynamics of gradient descent for linear models can provide limited insights into the dynamics of gradient descent for neural networks. Specifically, \cite{chizat2019lazytraining} showed that in some circumstances, such as with a large variance initialization, the trajectory of the parameters of a neural network is close to the trajectory of a linearized model. This regime has been called \emph{lazy training}. Note that we can think of the linearized network as a kernel ridge regression problem with the neural tangent kernel~\cite{jacot2018ntk}. Recent work such as \cite{geiger2020disentangling} seeks to understand the relationship between feature learning and lazy training, and \cite{kumar2024grokking} shows that grokking for neural networks occurs due to the transition from lazy training to rich training. 

While our work focuses on studying gradient descent, studying the dynamics and regularization effects of early stopping for stochastic gradient descent (SGD) is an important related problem.
Different learning rate schedules have been analyzed for SGD, and we will only mention a few works that do so.
Learning rate schedules with linear decay and switching from constant to linear decay were studied in~\cite{gower2019sgd}. In the streaming setting, SGD with square root decaying step sizes and oblivious noise was investigated in~\cite{pesme2020online}, and SGD with exponentially decaying step sizes and semi-adversarial noise was analyzed in~\cite{jeong2025stochastic}.

\subsection{Setup and preliminaries}
We shall begin by specifying our model and notation.
Suppose that we are given $n$ training data points $(z_i, y_i)$ drawn i.i.d.\ from a distribution $\mathcal{D}$, where $z_i \in \mathbb{R}^d$ are the input vectors and $y_i \in \mathbb{R}$ are the responses. Let $\Psi: \mathbb{R}^d \to \mathbb{R}^p$ be a feature map and $x_i = \Psi(z_i)$.
Let $X \in \mathbb{R}^{n \times p}$ be the feature matrix containing the feature vectors $x_i$ of the training data as rows, and $y \in \mathbb{R}^{n}$ denote the response vector.
Furthermore, let $\beta_*$ be the Bayes optimal linear predictor defined by
\begin{equation}
    \beta_* := \argmin_{\beta \in \mathbb{R}^p} \mathbb{E}_{(z_1, y_1) \sim \mathcal{D}} \left[|y_1 - \Psi(z_1)^T\beta|^2\right],
\end{equation}
and define the residual $\varepsilon := y - X\beta_*$, which has i.i.d.\ coordinates $\varepsilon_i = y_i - x_i^T \beta_*$.

Given a regularization parameter $\mu \geq 0$, the kernel ridge regression problem aims to minimize the following loss:
\begin{equation} \label{eq:ls-ridge}
    L(\beta) = \frac{1}{2n}\|y - X\beta\|_2^2 + \frac{\mu}{2}\|\beta\|^2_2.
\end{equation}
We solve the ridge regression problem~\eqref{eq:ls-ridge} using gradient descent. Let $\beta_0 \in \mathbb{R}^p$ be the chosen initialization and $\{ \eta_k \}_{k \geq 1}$ denote the sequence of step sizes for each iteration. If we denote the iterate after $k$ iterations of gradient descent by $\beta_k \in \mathbb{R}^p$, then the gradient descent update for the $k$th iteration is given by
\begin{equation} \label{eq:update}
    \beta_{k} = \beta_{k-1} - \frac{\eta_{k}}{n} X^T(X\beta_{k-1} - y) - \eta_{k}\mu \beta_{k-1}.
\end{equation}

For any estimator $\beta \in \mathbb{R}^p$, the \emph{excess risk} with respect to $\beta_*$ is given by
\begin{equation} \label{eq:excess_risk}
    \mathcal{R}(\beta)
    := \mathbb{E}_{z \sim \mathcal{D}_{\mathrm{test}}}\left[\|\Psi(z)^T\beta - \Psi(z)^T\beta_*\|^2_2 \mid X \right]
    = \|\beta - \beta_*\|^2_\Sigma,
\end{equation}
where $\Sigma := \mathbb{E}_{z \sim \mathcal{D}_{\mathrm{test}}}[\Psi(z) \Psi(z)^T]$ is the (uncentered) covariance matrix of the feature vectors $x = \Psi(z)$, drawn according to some test distribution $z \sim \mathcal{D}_{\mathrm{test}}$, $\|\cdot\|_2$ denotes the Euclidean norm, $\|v\|_\Sigma^2 = v^T\Sigma v$, and the expectation is taken over a newly drawn test sample, conditional on the training features $X$.

Finally, we introduce the following notation. Let $\hat{\Sigma} := n^{-1} X^T X \in \mathbb{R}^{p \times p}$ be the sample (uncentered) covariance matrix of the features $X$.
Let $X = U\Sigma_X V^T$ and $X^TX = V \Lambda V^T$ be the singular value decomposition and eigendecomposition of $X$ and $X^T X$, respectively, where $U \in \mathbb{R}^{n \times n}$ and $V \in \mathbb{R}^{p \times p}$ are orthogonal matrices, and $\Sigma_X \in \mathbb{R}^{n \times p}$ and $\Lambda \in \mathbb{R}^{p \times p}$ only have non-zero values on their diagonals, which we may assume to be in non-increasing order. Note that $\Lambda = \Sigma_X^T \Sigma_X = n V^T \hat{\Sigma} V$.
We shall also be interested in representing the parameters $\beta_k$ and residual $\varepsilon$ in the eigenbasis of $X^T X$ (i.e., $V$) and $XX^T$ (i.e., $U$), respectively, which we denote by
\begin{equation} \label{eq:in-eigbasis}
    \tilde{\beta}_k := V^T\beta_k, \quad \tilde{\beta}_* := V^T\beta_*, \quad \text{and} \quad \check{\varepsilon} := U^T \varepsilon.
\end{equation}
We shall denote the identity matrix by $I$. Given a matrix $A$, we shall denote its Frobenius norm by $\|A\|_F = \sqrt{\mathrm{Tr}(A^T A)}$ and its Moore-Penrose pseudoinverse by $A^{\dagger}$.

\begin{remark}
We do not restrict the class of feature maps; for example, $\Psi$ could represent a random feature model~\cite{RahimiRecht2007}, a neural network, or the feature map for the neural tangent kernel~\cite{jacot2018ntk}. Moreover, we do not require that the distribution $\mathcal{D}_{\mathrm{test}}$ for $x = \Psi(z)$ at test time from~\eqref{eq:excess_risk} to be the same as the distribution $\mathcal{D}$ used to obtain the training data. So, our setup allows for \emph{covariate shift}, which is an important problem that has been widely studied; see, e.g.,~\cite{tripuraneni2021covariate, kausiklowrank2024}.
\end{remark}

\section{Exact trajectories}
To understand the regularization effect of early stopping and to determine when to stop, we begin by quantifying the trajectory of the parameters $\beta_k$ and the dynamics of the excess risk. The aim of this section is to prove Proposition~\ref{prop:betak}, which gives exact expressions for the dynamics of $\beta_k$ and its representation $\tilde{\beta}_k$ in the eigenbasis $V$.

To build intuition, we first consider the unregularized problem ($\mu = 0$) with constant step size $\eta_k \equiv \eta$ and zero initialization $\beta_0 = 0$. Simple calculations (e.g., see Chapter~7.8 of~\cite{goodfellow2016deep}) show that we can recursively unravel the gradient descent dynamics from~\eqref{eq:update} to obtain
\begin{equation} \label{eq:example_gd_formula_constant}
    \beta_k = \left(I - \frac{\eta}{n}X^TX\right)^k\beta_0 + \sum_{i=1}^{k}\frac{\eta}{n}\left(I - \frac{\eta}{n}X^TX\right)^{k-i}X^Ty.
\end{equation}
From~\eqref{eq:example_gd_formula_constant}, we can deduce the classical convergence result: if $\eta$ is small enough, then the first term tends to zero as $k \to \infty$, and the second term converges to the minimum norm solution $(X^T X)^{\dagger} X^T y$. Similarly, if a sequence of non-negative step sizes $\{ \eta_k \}_{k \geq 1}$ is used, then the above changes to 
\begin{equation} \label{eq:example_gd_formula}
    \beta_k = \prod_{i=1}^k\left(I - \frac{\eta_i}{n}X^TX\right)\beta_0 + \sum_{i=1}^k \frac{\eta_i}{n}\prod_{j=i+1}^k \left(I - \frac{\eta_j}{n}X^TX\right) X^T y.
\end{equation}

The general expression~\eqref{eq:example_gd_formula} involves an unfactored matrix polynomial with coefficients that depend on the step sizes. In the case of constant step sizes, the dynamics are tractable because the polynomial expression can be simplified using the formula for the geometric series.
One of the technical contributions of our paper is a technique for factoring the polynomial above that works for any step size schedule.
With this goal in mind, we shall define the following function $\phi$:

\begin{definition} \label{defn:phi}
Given a learning rate schedule $\{ \eta_i \}_{i \geq 1}$ and a real number $\zeta$, define the function $\phi(k; \zeta, \{ \eta_i \})$ for positive integers $k$ by
\[
    \phi(k;\zeta, \{\eta_i\}) := \phi(0;\zeta, \{\eta_i\}) \cdot \prod_{i=1}^k(1 - \eta_i \zeta),
\]
where $\phi(0; \zeta, \{\eta_i\})$ is a normalization constant. Whenever it is clear from context, we shall suppress the dependence of $\phi$ on $\zeta$ and $\{ \eta_i \}_{i \geq 1}$.
Furthermore, we define $\Phi(k; \mu, \Lambda) \in \mathbb{R}^{p \times p}$ to be the diagonal matrix whose $j$th diagonal entry is given by
\begin{equation} \label{eq:defn_Phi_matrix}
    \Phi_j(k; \mu, \Lambda) := \frac{\phi(k; \mu + n^{-1} \Lambda_j)}{\phi(0;\mu + n^{-1}\Lambda_j)},
\end{equation}
where $n^{-1} \Lambda_j$ is the $j$th largest eigenvalue of the sample covariance matrix $\hat{\Sigma} = n^{-1} X^T X = n^{-1} V \Lambda V^T$. Note that $\Phi_j(k; 0,\Lambda) = 1$ if $j > \mathrm{rank}(X)$. In matrix form, we can write
\begin{equation} \label{eq:defn_Phi_matrix2}
    \Phi(k; \mu, \Lambda) = \prod_{i=1}^k (I - \eta_i (\mu I + n^{-1} \Lambda)).
\end{equation}
Whenever it is clear from context, we shall also suppress the dependence of $\Phi$ on $\mu$ and $\Lambda$. 
\end{definition}

In Section~\ref{sec:phi-exact-expressions}, we will derive explicit expressions for the function $\phi$ for a variety of standard learning rates. 
The following result \emph{exactly} characterizes the trajectory of $\beta_k$, and is completely generic in that it makes no assumptions on the data, the learning rate schedule, or the initialization. A special case of this result with $\mu = 0$ and $\beta_0 = 0$ was also obtained in~\cite{raskutti2014early}.

\begin{proposition}[Trajectory] \label{prop:betak}
Let $X = U\Sigma_X V^T$ be any feature matrix and $y = X \beta_* + \varepsilon$. If $\beta_k$ are the parameters after $k$ steps of gradient descent for the ridge regression problem~\eqref{eq:ls-ridge} with regularization parameter $\mu \geq 0$, initialized at $\beta_0$ and with arbitrary learning rate schedule $\{ \eta_k\}_{k \geq 1}$, then we have that 
\begin{equation} \label{eq:betak_1}
    \beta_k = V\Phi(k; \mu)V^T\beta_0 + (I - V\Phi(k; \mu)V^T)(\mu n I + X^TX)^{\dagger}X^Ty.
\end{equation}
Moreover, recalling that $\tilde{\beta}_k = V^T\beta_k$, $\tilde{\beta}_* = V^T\beta_*$, and $\check{\varepsilon} = U^T \varepsilon$, we have
\begin{equation} \label{eq:betak_2}
    \tilde{\beta}_k - \tilde{\beta}_* = \Phi(k; \mu)(\tilde{\beta}_0 - \tilde{\beta}_*) + (\mu n I + \Lambda)^{\dagger}(I - \Phi(k; \mu))(\Sigma_X^T \check{\varepsilon} - \mu n \tilde{\beta}_*).
\end{equation}
\end{proposition}

The proof of Proposition~\ref{prop:betak} follows from matrix computations, and will be presented in Appendix~\ref{app:deferred_proofs}.

\begin{remark}[Why is early stopping beneficial?]
Proposition~\ref{prop:betak} suggests that for a wide variety of learning rate schedules $\{\eta_k\}_{k \geq 1}$, early stopping can be beneficial. Specifically, if the step size is sufficiently small and $\Phi(k;\mu)$ is monotonically decreasing, then the error from learning the signal $\Phi(k;\mu)(\tilde{\beta}_0 - \tilde{\beta}_*)$ is monotonically decreasing, while the second term $(\mu n I + \Lambda)^{\dagger}(I - \Phi(k; \mu))(\Sigma_X^T \check{\varepsilon} - \mu n \tilde{\beta}_*)$ corresponding to fitting the noise is monotonically increasing. This suggests that we might want to balance the two quantities by stopping early.
We make this intuition precise in Section~\ref{sec:should-we-stop} by providing conditions on the learning rates under which early stopping is beneficial (or not).
\end{remark}

\begin{remark}[Decoupling of the learning dynamics]
Equation~\eqref{eq:betak_1} shows that $\beta_k$ can be thought of as an affine combination of the initialization $\beta_0$ and the minimum norm solution $(\mu n I + X^TX)^\dagger X^T y$ with weights given by $V\Phi(k)V^T$.
Equation~\eqref{eq:betak_2} also shows that the dynamics of $\beta_k - \beta_*$ decouple when the parameters are expressed in the eigenbasis $V$: recalling that $\Phi(k;\mu)$ is a diagonal matrix, it can be rearranged to
\begin{equation} \label{eq:decouple}
    \tilde{\beta}_k - \tilde{\beta}_*
    = \Phi(k;\mu)\left[(\tilde{\beta}_0 - \tilde{\beta}_*) - (\mu n I + \Lambda)^{\dagger} (\Sigma_X^T \check{\varepsilon} - \mu n \tilde{\beta}_*) \right] + (\mu n I + \Lambda)^{\dagger} (\Sigma_X^T \check{\varepsilon} - \mu n \tilde{\beta}_*). 
\end{equation}
Thus, each coordinate of $\tilde{\beta}_k - \tilde{\beta}_*$ evolves independently of the others. 
\end{remark}

\subsection{Formulas for the function \texorpdfstring{$\phi$}{phi}} \label{sec:phi-exact-expressions}
For generic learning rate schedules, we may not have closed-form expressions for $\phi(k; \zeta, \{ \eta_i\} )$. However, for many common learning rate schedules, $\phi$ can indeed be evaluated in terms of analytic functions.
For \emph{constant learning rate schedules} (i.e., $\eta_k \equiv \eta$), we can see that Definition~\ref{defn:phi} is satisfied by
\[
    \phi(k; \zeta, \{\eta\}) = \left( 1 - \eta \zeta\right)^k \quad \text{and} \quad \phi(0;\zeta) = 1.
\]
The following two results give expressions for $\phi$ in terms of the Gamma function for \emph{learning rates with polynomial decay} or \emph{constant additive decay}.

\begin{proposition}[Polynomial decay] \label{prop:polynomial}
If $\eta_k = \eta/k^m$ for some integer $m \geq 1$, then we have that
\begin{equation} \label{eq:polynomial}
    \phi(k; \zeta) = \frac{1}{\Gamma(k+1)^m} \prod_{j=1}^m \Gamma(k+1 - \omega_j (\eta \zeta)^{1/m}) \quad \text{and} \quad \phi(0; \zeta) = \prod_{j=1}^m \Gamma(1 - \omega_j(\eta \zeta)^{1/m}),
\end{equation}
where $\omega_1, \ldots, \omega_m$ are the $m$th roots of unity, and $\Gamma(z) = \int^\infty_0 t^{z-1} e^{-t} \,\mathrm{d}t$ is the Gamma function.
\end{proposition}

\begin{proof}[Proof.]
First, note that using the roots of unity, we have the following polynomial factorization:
\[
    1 - \frac{\eta}{k^m} \zeta = \prod_{j=1}^m \left(1 - \frac{\omega_j}{k}(\eta \zeta)^{1/m}\right).
\]
Thus, given a learning rate schedule with polynomial decay with $\eta_k = \eta / k^m$, by using this identity and switching the order of the products, $\phi(k; \zeta)$ and $\phi(0; \zeta)$ must satisfy
\begin{align*}
    \phi(k; \zeta)
    &= \phi(0; \zeta) \cdot \prod_{i=1}^k \left( 1 - \frac{\eta}{i^m} \zeta \right) \\
    &= \phi(0; \zeta) \cdot \prod_{i=1}^k \prod_{j=1}^m \left(1 - \frac{\omega_j}{i}(\eta \zeta)^{1/m}\right)
    = \frac{\phi(0; \zeta)}{\Gamma(k+1)^m} \cdot \prod_{j=1}^m \prod_{i=1}^k \left(i - \omega_j(\eta \zeta)^{1/m}\right).
\end{align*}
From the fundamental property $\Gamma(z+1) = z \Gamma(z)$ of the Gamma function, we have
\[
    \prod_{i=1}^k \left(i - \omega_j(\eta \zeta)^{1/m}\right) = \frac{\Gamma(k+1 - \omega_j (\eta \zeta)^{1/m})}{\Gamma(1 - \omega_j (\eta \zeta)^{1/m})}.
\]
Thus, defining $\phi(k;\zeta)$ and $\phi(0;\zeta)$ as in~\eqref{eq:polynomial} leads to a valid expression according to Definition~\ref{defn:phi}.
\end{proof}

\begin{proposition}[Constant additive decay] \label{prop:constant-decay}
If $\eta_k = \eta_0 - k \eta$, then we have that 
\begin{equation} \label{eq:step}
    \phi(k; \zeta) = (\eta \zeta)^k \cdot \Gamma\left(k+1 + \frac{1-\eta_0 \zeta}{\eta \zeta}\right) \quad \text{and} \quad \phi(0; \zeta) =  \Gamma\left(1 + \frac{1-\eta_0 \zeta}{\eta \zeta}\right),
\end{equation}
where $\Gamma(z) = \int^\infty_0 t^{z-1} e^{-t} \,\mathrm{d}t$ is the Gamma function.
\end{proposition}
\begin{proof}[Proof.]
The function $\phi(k; \zeta)$ and $\phi(0; \zeta)$ must satisfy
\[
    \phi(k;\zeta)
    = \phi(0;\zeta) \cdot \prod_{i=1}^k (1 - \eta_0 \zeta + i \eta \zeta)
    = \phi(0;\zeta) \cdot (\eta \zeta)^k \prod_{i=1}^k \left(i + \frac{1 - \eta_0 \zeta}{\eta \zeta}\right).
\]
Using the property $\Gamma(z + 1) = z \Gamma (z)$ again verifies the validity of the claimed expressions.
\end{proof}

Finally, for \emph{learning rates with exponential decay}, $\phi$ can be evaluated in terms of a function known as the \emph{$q$-Pochhammer symbol}, which is defined by
\begin{equation} \label{eq:q-pochhammer}
    (a; q)_n = \prod_{i=0}^{n-1} (1 - aq^i).
\end{equation}
From the definition of the $q$-Pochhammer symbol~\eqref{eq:q-pochhammer}, we immediately obtain the following:

\begin{proposition}[Exponential decay] \label{prop:exponential}
If $\eta_k = \eta^k$, then we have that
\begin{equation}
    \phi(k; \zeta) = (\zeta; \eta)_{k+1} \quad \text{and} \quad \phi(0; \zeta) = 1 - \zeta.
\end{equation}
\end{proposition}

We can also obtain a formula for composite learning schedules by multiplying the corresponding $\phi$ functions. For example, suppose that we are given a cyclic learning rate schedule with period $T$ such that $\eta_{k+T} = \eta_k$ for all $k$. Then, for any iteration $k = qT + r$, we have that
\begin{equation}
    \phi(k;\zeta) = \left(\frac{\phi(T;\zeta)}{\phi(0;\zeta)} \right)^q \phi(r;\zeta).
\end{equation}

In summary, we can obtain closed-form expressions for $\phi$ for many (but not all) learning rate schedules used in practice.
One prominent learning rate schedule that we do not have a formula for is \emph{cosine annealing}~\cite{loshchilov2017sgdr}. In its standard form, cosine annealing chooses the learning rate at iteration $k$ to vary smoothly between a maximum value $\eta_{\mathrm{max}}$ and a minimum value $\eta_{\mathrm{min}}$ according to a cosine profile, typically of the form
\begin{equation} \label{eq:cosine_annealing}
    \eta_k = \eta_{\mathrm{min}} + \frac{1}{2}(\eta_{\mathrm{max}} - \eta_{\mathrm{min}}) \left( 1 + \cos\!\left(\frac{\pi k}{T}\right) \right),
\end{equation}
where $T$ is the annealing horizon. This schedule is widely used because it decreases the learning rate smoothly rather than through abrupt drops, and it is often combined with warm restarts that periodically reset the learning rate to a larger value.

\begin{remark}[Role of the closed-form filters]
The closed-form expressions for $\phi$ are useful because the effect of the learning rate schedule $\{ \eta_i \}_{i \geq 1}$ only enters the dynamics through
\[
    \Phi_j(k;\mu,\Lambda)
    = \frac{\phi(k;\mu+n^{-1}\Lambda_j)}
    {\phi(0;\mu+n^{-1}\Lambda_j)}
    = \prod_{i=1}^k
    \left(1-\eta_i(\mu+n^{-1}\Lambda_j)\right).
\]
Thus, once $\phi$ is known, we obtain explicit formulas for the parameters after $k$ iterations in Proposition~\ref{prop:betak}, which will be crucial for studying early stopping.
These formulas also make it possible to compare learning rate schedules through the rate at which they shrink different eigendirections. Hence, besides being algebraic conveniences, the closed forms also provide a way to analyze the regularization effect of different schedules.
\end{remark}

\section{Early stopping and generalized ridge regularization}

In this section, we show an equivalence between early stopping for the least squares problem with $\mu = 0$ and generalized ridge regularization for generic data and learning rate schedules.
Specifically, given a matrix $D \in \mathbb{R}^{p \times p}$, the \emph{generalized ridge regression problem} is the following:
\begin{equation} \label{eq:gen-ridge}
    \argmin_{\beta \in \mathbb{R}^p} \frac{1}{2n}\|y - X \beta\|^2 + \frac{1}{2}\| D \beta \|^2.
\end{equation}
By considering generalized ridge regression, we can allow for a different regularization strength for each coordinate of $\tilde{\beta}_k$ in the eigenspaces of $X^T X$.
Recall that $\Phi(T) \equiv \Phi(T; 0, \Lambda)$ is the diagonal matrix defined in~\eqref{eq:defn_Phi_matrix2}.
Then, using Proposition~\ref{prop:betak}, we can show that the early stopped solution after $T$ iterations solves a generalized ridge regression problem corresponding to some matrix $D$ depending on $\Phi(T)$:

\begin{theorem}[Early stopping $\implies$ generalized ridge regression] \label{thm:equivalence1}
Let $X$ be any feature matrix with rank $r$, and $y = X \beta_* + \varepsilon$. Suppose that $\beta_T$ are the parameters after $T$ steps of gradient descent for the least squares problem~\eqref{eq:ls-ridge} with $\mu = 0$, initialized at $\beta_0 = 0$, and with arbitrary learning rate schedule $\{ \eta_k\}_{k \geq 1}$. Assume that $\Phi_j(T) \ne 1$ for $j = 1, \ldots, r$. Then, the early stopped solution $\beta_T$ is the minimum norm solution to the generalized ridge regression problem~\eqref{eq:gen-ridge} with
\[
    D = \left(\frac{1}{n}\Lambda \Phi(T)(I-\Phi(T))^{\dagger}\right)^{1/2}V^T.
\]
\end{theorem}

\begin{remark}
Crucially, note that Theorem~\ref{thm:equivalence1} makes no assumptions on the data or the learning rate schedule besides the technical condition that $\Phi_j(T) \ne 1$, which is generically satisfied.
In fact, if we consider large learning rates, then $\Phi(T)$ could have negative entries! In this case the entries of $D$ would be complex. Prior work~\cite{yilmaz2022regularization} has shown that the optimal ridge regularization parameter can be negative. Here, we see that this is analogous to training with $\mu = 0$ and large learning rates.
A version of Theorem~\ref{thm:equivalence1} was proved in \cite[Lemma 3]{ali2019continuous} for constant step size schedules.
\end{remark}

The main idea behind the proof of Theorem~\ref{thm:equivalence1} is that we have a closed form expression for the minimum norm solution of the generalized ridge regression problem, which can be shown to coincide with the expression for $\beta_T$ from Proposition~\ref{prop:betak}.

\begin{proof}[Proof of Theorem~\ref{thm:equivalence1}.]
First, observe that the generalized ridge regression problem can be reformulated as an augmented least squares problem:
\[
    \frac{1}{2n}\|y - X \beta\|^2 + \frac{1}{2}\| D \beta \|^2 = \frac{1}{2n}\left\|\begin{bmatrix} y \\ 0_p \end{bmatrix} - \begin{bmatrix} X \\ \sqrt{n}D \end{bmatrix} \beta \right\|_2^2.
\]
The solution $\beta^{(D)}$ to the generalized ridge regression problem~\eqref{eq:gen-ridge} is given by
\begin{align*}
   \beta^{(D)}
   &= \left( \begin{bmatrix} X^T & \sqrt{n} D^T \end{bmatrix} \begin{bmatrix} X \\ \sqrt{n} D \end{bmatrix}\right)^{\dagger}  \begin{bmatrix} X^T & \sqrt{n} D^T \end{bmatrix} \begin{bmatrix} y \\ 0_p \end{bmatrix} \\
   &= \left(X^TX + n D^T D \right)^{\dagger}  X^T y \\
   &= V\left(\Lambda + n V^T D^T D V \right)^{\dagger}  V^T X^T(X\beta_* + \varepsilon) \\
   &= V\left(\Lambda + n V^T D^T D V \right)^{\dagger}  \Lambda \tilde{\beta_*} + V\left(\Lambda + n V^T D^T D V \right)^{\dagger}  \Sigma_X^T \check{\varepsilon}.
\end{align*}
Let $\tilde{\beta}^{(D)} := V^T \beta^{(D)}$.
By multiplying both sides by $V^T$ and subtracting $\tilde{\beta}_*$ from both sides, we obtain
\begin{align*}
       \tilde{\beta}^{(D)} - \tilde{\beta}_*
       &= \left [\left(\Lambda + n V^T D^T D V \right)^{\dagger}  \Lambda - I \right] \tilde{\beta}_* + \left(\Lambda + n V^T D^T D V \right)^{\dagger} \Sigma_X^T \check{\varepsilon}.
\end{align*}
Next, substituting in the chosen value of $D$, we have that
\begin{align*}
    \Lambda + n V^T D^T D V
    &= \Lambda + \Lambda \Phi(T)(I-\Phi(T))^{\dagger} .
\end{align*}
To simplify this expression, we note that all of the matrices are diagonal matrices. To handle both the underparameterized and overparameterized cases simultaneously, let $r = \mathrm{rank}(X)$, and write $A_{1:r,1:r}$ to denote the $r \times r$ principal submatrix of a matrix $A$. Then, we have that 
\[
    \Lambda = \begin{bmatrix} \Lambda_{1:r,1:r} & 0 \\ 0 & 0 \end{bmatrix}
    \quad\text{and}\quad
    \Phi(T) = \begin{bmatrix}
        \Phi(T)_{1:r, 1:r} & 0 \\ 0 & I
    \end{bmatrix}.
\]
Thus, we see that 
\begin{align*}
    &\Lambda + n V^T D^T D V  = \Lambda + \Lambda \Phi(T)(I-\Phi(T))^{\dagger}  \\
    &\quad= \begin{bmatrix} \Lambda_{1:r,1:r} & 0 \\ 0 & 0 \end{bmatrix} \left(I + \begin{bmatrix}
        \Phi(T)_{1:r, 1:r} & 0 \\ 0 & I
    \end{bmatrix} \begin{bmatrix}
        I - \Phi(T)_{1:r, 1:r} & 0 \\ 0 & 0
    \end{bmatrix}^\dagger\right) \\
    &\quad= \begin{bmatrix} \Lambda_{1:r,1:r} & 0 \\ 0 & 0 \end{bmatrix} \begin{bmatrix}
        I + \Phi(T)_{1:r, 1:r} (I - \Phi(T)_{1:r, 1:r})^{-1} & 0 \\ 0 &  I
    \end{bmatrix}.
\end{align*}
Observe that for the principal $r \times r$ submatrix, we have
\begin{align*}
    I + \Phi(T)_{1:r, 1:r} (I - \Phi(T)_{1:r, 1:r})^{-1}
    &= \left[(I - \Phi(T)_{1:r, 1:r}) + \Phi(T)_{1:r, 1:r} \right](I - \Phi(T)_{1:r, 1:r})^{-1} \\
    &= (I - \Phi(T)_{1:r, 1:r})^{-1}.
\end{align*}
Hence, we have that
\begin{align*}
    \left(\Lambda + n V^T D^T D V \right)
    &= \begin{bmatrix}
        \Lambda_{1:r,1:r} (I - \Phi(T)_{1:r, 1:r})^{-1} & 0 \\ 0 & 0
    \end{bmatrix}.
\end{align*}
Since $\Lambda_{1:r,1:r}$ is invertible and $I-\Phi(T)_{1:r,1:r}$ is invertible, the non-zero principal block is invertible. Therefore, taking the Moore--Penrose pseudoinverse amounts to inverting this principal block and leaving the null block equal to zero. Thus,
\[ 
    \left(\Lambda + n V^T D^T D V \right)^{\dagger} = \begin{bmatrix} (I-\Phi(T)_{1:r,1:r})\Lambda_{1:r,1:r}^{-1} & 0 \\ 0 & 0 \end{bmatrix}. 
\] 
Equivalently, since the matrices are diagonal, this can be written as 
\[ 
    \left(\Lambda + n V^T D^T D V \right)^{\dagger} = \Lambda^\dagger (I-\Phi(T)), \] where \[ I-\Phi(T) = \begin{bmatrix} I-\Phi(T)_{1:r,1:r} & 0 \\ 0 & 0 \end{bmatrix}. 
\] 
Hence, using the fact that the diagonal matrices commute, we have
\[ 
    \left(\Lambda + n V^T D^T D V \right)^{\dagger}\Lambda = \begin{bmatrix} I-\Phi(T)_{1:r,1:r} & 0 \\ 0 & 0 \end{bmatrix}. 
\] 
Therefore, 
\[ 
    \left[ \left(\Lambda + n V^T D^T D V \right)^{\dagger}\Lambda - I \right]\tilde{\beta}_* = \begin{bmatrix} -\Phi(T)_{1:r,1:r} & 0 \\ 0 & -I \end{bmatrix} \tilde{\beta}_* = \begin{bmatrix} \Phi(T)_{1:r,1:r} & 0 \\ 0 & I \end{bmatrix} (-\tilde{\beta}_*).
\]
Substituting this back in, we obtain
\[
    \tilde{\beta}^{(D)} - \tilde{\beta}_* = \begin{bmatrix} \Phi(T)_{1:r,1:r} & 0 \\ 0 & I \end{bmatrix} (-\tilde{\beta}_*) + \Lambda^\dagger (I-\Phi(T)) \Sigma_X^T \check{\varepsilon}. 
\]
From Proposition~\ref{prop:betak}, we see that with $\mu = 0$ and $\beta_0 = 0$, the parameter estimate after $T$ iterations also satisfies
\[
    \tilde{\beta}_T - \tilde{\beta}_* = \begin{bmatrix} \Phi(T)_{1:r, 1:r} & 0 \\ 0 & I \end{bmatrix}(-\tilde{\beta}_*) + \Lambda^{\dagger} (I - \Phi(T)_{1:r, 1:r})\Sigma_X^T\check{\varepsilon},
\]
which completes the proof.
\end{proof}

Next, we shall present a partial converse to Theorem~\ref{thm:equivalence1}. We show that for any regularization parameter $\mu$, the minimum norm solution of the ridge regression problem~\eqref{eq:ls-ridge} can be obtained via early stopping. However, similar to Theorem~\ref{thm:equivalence1} where we had to regularize each component of $\tilde{\beta}_k$ in the eigenspaces of $X^T X$ independently, we require a different (constant) learning rate for each direction.

\begin{theorem}[Ridge regularization $\implies$ early stopping]\label{thm:equivalence2}
Let $X$ be any feature matrix and $y = X \beta_* + \varepsilon$. For any regularization parameter $\mu$, let $\beta^{(\mu)} = \left(\mu n I + X^TX\right)^{\dagger} X^T y$ be the minimum norm solution to the kernel ridge regression problem~\eqref{eq:ls-ridge}. Suppose that for each $j$, we choose
\[
    \eta^{(j)} = \frac{n}{\mu n + \Lambda_j}
\]
to be the learning rate for the $j$th coordinate of $\tilde{\beta}$ (if $\mu n + \Lambda_j \;=\; 0$, then we choose $\eta^{(j)} = 0$). Then, after one step of gradient descent for the unpenalized least squares problem~\eqref{eq:ls-ridge} with $\mu = 0$, initialized at $\beta_0 = 0$, we obtain $\beta^{(\mu)}$.
\end{theorem}

\begin{proof}[Proof.]
Recall that the dynamics of $\beta_k - \beta_*$ decouple when expressed in the eigenbasis $V$ from~\eqref{eq:decouple}.
With $\beta_0 = 0$, one step of gradient descent for the unregularized least squares problem with step size $\eta^{(j)}$ for each coordinate of $\tilde{\beta}$ can be written as
\begin{align*}
    \tilde{\beta}_1
    &= \frac{1}{n} \begin{bmatrix}
        \eta^{(1)} & 0 & \ldots & & 0 \\
        0 & \eta^{(2)} & \ldots & & 0 \\
        \vdots & \vdots & \ddots & & \vdots \\
        & & & \eta^{(p-1)} & 0 \\
        0 & 0 & \ldots & 0 & \eta^{(p)} 
    \end{bmatrix} \left[\Lambda \tilde{\beta}_* + \Sigma_X^T \check{\varepsilon}\right] \\
    &= \left(\mu n I + \Lambda\right)^{\dagger}  \left[\Lambda \tilde{\beta}_* + \Sigma_X^T \check{\varepsilon}\right]\\
    &= V^T\left(\mu n I + V\Lambda V^T\right)^{\dagger}  \left[V\Lambda V^T \beta_* + X^T\varepsilon\right]\\
    &= V^T\left(\mu n I + X^TX\right)^{\dagger}  X^T\left[X\beta_* +\varepsilon\right] \\
    &= V^T\left(\mu n I + X^TX\right)^{\dagger} X^Ty.
\end{align*}
Multiplying both sides by $V$ shows that $\beta_1 = \left(\mu n I + X^TX\right)^{\dagger} X^Ty = \beta^{(\mu)}$, as desired.
\end{proof}

\begin{remark}[Coordinate-dependent stopping times]
A version of Theorem~\ref{thm:equivalence2} can be presented given any learning rate schedule $\{ \eta_i \}_{i \geq 1}$ such that $\phi(k; \zeta, \{ \eta_i \}) \to 0$ monotonically as $k \to \infty$ for any $\zeta$. In this case, for each coordinate $j$ of $\tilde{\beta}$, we would choose a separate stopping time $T_j$ such that
\[
    \Phi_j(T_j; \Lambda_j/n,\{\eta_i\})
\]
is closest to $\mu n/(\mu n + \Lambda_j)$. However, since we cannot guarantee equality for generic learning rate schedules $\{\eta_i\}_{i\geq 1}$, we can only obtain an approximation in this way.

Using different learning rates for different parameters is common in modern optimization; for example, differential learning rate methods assign separate learning rates to different parameter groups and have recently been studied from a Hessian-informed perspective~\cite{xu2025hessian}.
However, note that the coordinates of $\tilde{\beta}$ are with respect to the eigendirections of the empirical covariance $X^T X$. Since the gradient descent dynamics of the parameters decouple in this spectral basis, each coordinate has its own scalar shrinkage factor. Hence, the coordinate-dependent stopping times $T_j$ allow for an explicit comparison with ridge regularization, and should not be interpreted as a practical training prescription.
\end{remark}

\section{Should we stop early?} \label{sec:should-we-stop}

In the previous section, we showed that early stopping acts like a regularizer. However, that does not tell us whether (a) regularization is beneficial; and (b) if it is beneficial, what the optimal stopping time is. In this section, we provide conditions for when early stopping is beneficial (or not).

\subsection{Early stopped risk}
To determine when to stop, we need to understand the dynamics of the \emph{expected excess risk}
\begin{equation} \label{eq:expected_risk}
    R(\beta_k) := \mathbb{E}_\varepsilon[\mathcal{R}(\beta_k) \mid X] = \mathbb{E}_\varepsilon[\|\beta_k - \beta_*\|^2_\Sigma \mid X]
\end{equation}
during training, where the expectation is taken over the residual $\varepsilon = y - X \beta_*$ (which has i.i.d.\ coordinates $\varepsilon_i = y_i - x_i^T \beta_*$), conditional on the feature matrix $X$.
To formulate our result, we shall impose the following assumption on the first and second moments of the residual $\varepsilon$:
\begin{assumption} \label{assumption:noise}
    The coordinates of $\varepsilon$ satisfy $\mathbb{E}[\varepsilon_i \mid x_i] = 0$ and $\mathbb{E}[\varepsilon_i^2 \mid x_i] = \tau^2 < \infty$ for all $1 \leq i \leq n$.
\end{assumption}
Note that Assumption~\ref{assumption:noise} is more general than the common assumption that the conditional distribution of the residual $\varepsilon$ given $x$ is subgaussian~\cite{bartlett2020benign, xu2023towards, zou2023benign, raskutti2014early}. 

Recall that $\Phi(k; \mu)$ is the diagonal matrix defined in~\eqref{eq:defn_Phi_matrix2}, and $\tilde{\beta}_k = V^T\beta_k$ and $\tilde{\beta}_* = V^T\beta_*$ from~\eqref{eq:in-eigbasis}. Then, we can leverage the formula for $\tilde{\beta}_k - \tilde{\beta}_*$ given in Proposition~\ref{prop:betak} to obtain the following exact formula for the expected excess risk $R(\beta_k)$:

\begin{proposition}[Risk with ridge regularization] \label{prop:gen-risk-betak-mu}
Let $X = U \Sigma_X V^T$ be any feature matrix and $y = X \beta_* + \varepsilon$. Assume that Assumption~\ref{assumption:noise} holds. If $\beta_k$ are the parameters after $k$ steps of gradient descent for the ridge regression problem~\eqref{eq:ls-ridge} with ridge regularization parameter $\mu \geq 0$, initialized at any $\beta_0$, and with arbitrary learning rate schedule $\{ \eta_k\}_{k \geq 1}$, then
\begin{align*}
    R(\beta_k) &= \left\| \Sigma^{1/2} V \left[ \Phi(k; \mu)(\tilde{\beta}_0 - \tilde{\beta}_*) - \mu n (\mu n I + \Lambda)^\dagger (I - \Phi(k; \mu)) \tilde{\beta}_* \right] \right\|_2^2 \\
    &\quad+ \tau^2 \left\| \Sigma^{1/2} V (\mu n I + \Lambda)^\dagger (I - \Phi(k; \mu)) \Lambda^{1/2} \right\|_F^2.
\end{align*}
\end{proposition}

The proof of Proposition~\ref{prop:gen-risk-betak-mu} will be provided in Appendix~\ref{app:deferred_proofs}.
As an immediate corollary of Proposition~\ref{prop:gen-risk-betak-mu}, we can read off the expected excess risk for the ridgeless case with $\mu = 0$.

\begin{corollary}[Ridgeless risk] \label{cor:gen-risk-betak}
Let $X = U \Sigma_X V^T$ be any feature matrix and $y = X \beta_* + \varepsilon$. Assume that Assumption~\ref{assumption:noise} holds. If $\beta_k$ are the parameters after $k$ steps of gradient descent for the least squares problem~\eqref{eq:ls-ridge} with $\mu = 0$, initialized at any $\beta_0$, and with arbitrary learning rate schedule $\{ \eta_k\}_{k \geq 1}$, then, recalling that $\Phi(k) \equiv \Phi(k; 0)$,
\[
    R(\beta_k) = \left\|\Sigma^{1/2}V\Phi(k)(\tilde{\beta}_0 - \tilde{\beta}_*)\right\|_2^2 + \tau^2\left\|\Sigma^{1/2}V(I - \Phi(k))\Sigma_X^\dagger \right\|_F^2.
\]
\end{corollary}

\begin{remark}[Early stopping as regularization] Corollary~\ref{cor:gen-risk-betak} offers another perspective on the regularization effects of early stopping to complement Theorem~\ref{thm:equivalence1}, which describes the regularizing effects of early stopping on the solution obtained from the viewpoint of training the model when initialized zero. On the other hand, Corollary~\ref{cor:gen-risk-betak} allows us to understand early stopping from the perspective of the generalization error with generic initialization. Indeed, prior work~\cite{DobribanWager2018, cheng2024dimension} has shown that for generic data, the expected excess risk for the solution $\beta^{(\mu)}$ to the ridge regression problem~\eqref{eq:ls-ridge} with regularization parameter $\mu$ is given by 
\[
    R(\beta^{(\mu)}) = \mu^2 \beta_*^T(\hat{\Sigma} + \mu I)^{-1}\Sigma (\hat{\Sigma} + \mu I)^{-1} \beta_* + \frac{\tau^2}{n}\Tr\left(\Sigma \hat{\Sigma} (\hat{\Sigma} + \mu I)^{-2}\right),
\]
recalling that $\hat{\Sigma} = n^{-1} X^T X$.
From Corollary~\ref{cor:gen-risk-betak}, we see that the early stopped risk after $k$ iterations of gradient descent for the unpenalized least squares problem (with $\mu = 0$) is given by 
\[
    R(\beta_k) = (\beta_0 - \beta_*)^T(V\Phi(k)V^T) \Sigma (V\Phi(k)V^T) (\beta_0 - \beta_*) + \frac{\tau^2}{n} \Tr\left(\Sigma \hat{\Sigma}^{\dagger} (I - V\Phi(k)V^T)^2 \right).
\]
We see that this is similar to the risk for $\beta^{(\mu)}$ under the correspondence $V\Phi(k)V^T \leftrightarrow \mu(\hat{\Sigma}+\mu I)^{-1}$. Indeed, note that if $I - V \Phi(k) V^T = I - \mu (\hat{\Sigma} + \mu I)^{-1} = \hat{\Sigma}(\hat{\Sigma} + \mu I)^{-1}$, then $\hat{\Sigma}^{\dagger} (I - V \Phi(k) V^T)^2 = \hat{\Sigma} (\hat{\Sigma} + \mu I)^{-2}$. Recall that the $j$th diagonal entry of $\Phi(k) \equiv \Phi(k; 0)$ is equal to $\prod_{i=1}^k \bigl( 1 - \eta_i n^{-1} \Lambda_j \bigr)$.
\end{remark}

\subsection{When is early stopping beneficial?}

Given Proposition~\ref{prop:gen-risk-betak-mu}, all we need to do to find the optimal iteration $k$ to minimize the expected excess risk is, in principle, to differentiate $R(\beta_k)$ with respect to $k$ and compute the critical points. However, the issue is that the right-hand side is only defined for discrete values of $k$. To get around this technical problem, we make the following assumption on $\phi(k; \zeta)$, which implies that $\Phi(k; \mu)$ can be extended to a differentiable function of $k$:

\begin{assumption} \label{assumption:differentiable-extension}
For all fixed $\zeta$ and learning rate schedules $\{\eta_i\}_{i \geq 1}$ such that for all $i$, $\eta_i \le \zeta^{-1}$, the function $k \mapsto \phi(k; \zeta, \{\eta_i\})$ can be extended to a monotonic differentiable function on $[1, \infty)$.
\end{assumption}

The differentiability of the extension is satisfied by many learning rate schedules. For example, recall that with constant step sizes $\eta_k \equiv \eta$, $\phi(k; \zeta) = (1 - \eta \zeta)^k$, which can clearly be extended to the differentiable function $x \mapsto (1 - \eta \zeta)^x$ on $\mathbb{R}$.
Propositions~\ref{prop:polynomial} and~\ref{prop:constant-decay} show that learning rate schedules with polynomial decay ($\eta_k = \eta/k^m$) and constant additive decay ($\eta_k = \eta_0 - k\eta$) also satisfy Assumption~\ref{assumption:differentiable-extension} since the Gamma function is known to be differentiable on $[1, \infty)$.
Finally, learning rates with exponential decay ($\eta_k = \eta^k$) also satisfy this assumption; this is implied by Proposition~\ref{prop:exponential} and the differentiability of a suitable extension of the $q$-Pochhammer symbol, the technical details of which are established in Appendix~\ref{app:q-pochh}.

Additionally, we shall also make the following statistical assumption on $\beta_0 - \beta_*$, which is similar to a common statistical \emph{spherical prior assumption} that $\beta_* \sim \mathcal{N}(0, \sigma^2 I)$ with zero initialization~\cite{DobribanWager2018, AdSaSo2020, ali2019continuous}:
\begin{assumption} \label{assumption:spherical}
    The entries of $\beta_0 - \beta_*$ are i.i.d.\ and have mean 0 and variance $\sigma^2$.
\end{assumption}
This means that the results address the performance of generic signals on average. Under Assumption~\ref{assumption:spherical}, we shall also take an additional expectation and analyze the \emph{Bayes excess risk}:
\begin{equation} \label{eq:bayes_risk}
    R_B(\beta_k) := \mathbb{E}_{\beta_*-\beta_0}[R(\beta_k)].
\end{equation}

To build intuition, we begin by presenting our results on when early stopping for the least squares problem~\eqref{eq:ls-ridge} with regularization parameter $\mu = 0$ is beneficial.

\begin{theorem}[Early stopping] \label{thm:earlystopping}
Let $X = U \Sigma_X V^T$ be any feature matrix with $\mathrm{rank}(X) = r$, and $y = X \beta_* + \varepsilon$. Recall that $\Lambda_1$ is the largest eigenvalue of $X^T X$. Suppose that Assumptions~\ref{assumption:noise}, \ref{assumption:differentiable-extension}, and~\ref{assumption:spherical} hold.
Let $\beta_k$ be the parameters after $k$ steps of gradient descent for the unpenalized least squares problem~\eqref{eq:ls-ridge} with $\mu = 0$, initialized at any $\beta_0$.
If the learning rate schedule $\{ \eta_k \}_{k \geq 1}$ is such that $\eta_k \le 1 / (n^{-1} \Lambda_1)$ for all $k$, and for all $j = 1, \ldots, r$ we have that 
\begin{equation} \label{eq:earlystopping_condition}
    \lim_{k \to \infty} \frac{\phi(k; n^{-1} \Lambda_j)}{\phi(0; n^{-1} \Lambda_j)} < \frac{\tau^2}{\Lambda_j \sigma^2 + \tau^2},
\end{equation}
then there is a finite $T$ such that for all $k \ge T$, $R_B(\beta_k) \ge R_B(\beta_T)$. That is, early stopping is beneficial.
Furthermore, if $P := V^T\Sigma V$, then for all $k$, 
\begin{equation} \label{eq:earlystopping_lower}
   R_B(\beta_k) \ge \sigma^2\left[ \sum_{j=1}^r P_{jj}\frac{\tau^2}{\Lambda_j\sigma^2 + \tau^2} + \sum_{j=r+1}^p P_{jj}\right].
\end{equation}
\end{theorem}

\begin{proof}[Proof.]
From Corollary~\ref{cor:gen-risk-betak}, noting that $\Lambda^\dagger = (\Sigma_X^T \Sigma_X)^{\dagger}$ is a rank $r$ diagonal matrix, the expected excess risk after $k$ iterations is given by
\begin{align*} 
    R(\beta_k) &= (\tilde{\beta}_0 - \tilde{\beta}_*)^T \Phi(k) P \Phi(k) (\tilde{\beta}_0 - \tilde{\beta}_*) + \tau^2\Tr\left( \Lambda^\dagger (I-\Phi(k))P(I-\Phi(k)) \right)\\
    &= \sum_{i,j=1}^p P_{ij} \left[\frac{\phi(k; n^{-1}\Lambda_i)\phi(k; n^{-1}\Lambda_j)}{\phi(0; n^{-1}\Lambda_i)\phi(0; n^{-1}\Lambda_j)}(\tilde{\beta}_0 - \tilde{\beta}_*)_i(\tilde{\beta}_0 - \tilde{\beta}_*)_j \right] \\
    &\quad+ \sum_{j=1}^r P_{jj} \cdot \tau^2\frac{1}{\Lambda_j}\left(1-\frac{\phi(k; n^{-1}\Lambda_j)}{\phi(0; n^{-1}\Lambda_j)}\right)^2.
\end{align*}
Taking the expectation over $\beta_0 - \beta_*$ under Assumption~\ref{assumption:spherical}, we see that the cross terms vanish, and the Bayes excess risk is given by
\begin{equation} \label{eq:earlystopping_1}
    R_B(\beta_k) = \sum_{j=1}^{p} P_{jj} \cdot \frac{\phi(k;n^{-1}\Lambda_j)^2}{\phi(0;n^{-1}\Lambda_j)^2}\sigma^2 
    + \sum_{j=1}^r P_{jj} \cdot \tau^2\frac{1}{\Lambda_j}\left(1-\frac{\phi(k; n^{-1}\Lambda_j)}{\phi(0; n^{-1}\Lambda_j)}\right)^2.
\end{equation}
Taking the derivative in $k$ and noting that $\phi(k; n^{-1}\Lambda_j)$ is a constant for $j > r$, we get
\begin{align}
    \partial_k R_B(\beta_k)
    &= 2\sum_{j=1}^r P_{jj} \cdot \frac{\partial_k \phi(k; n^{-1}\Lambda_j)}{\phi(0; n^{-1}\Lambda_j)} \left[\frac{\phi(k; n^{-1}\Lambda_j)}{\phi(0; n^{-1}\Lambda_j)}\sigma^2 - \tau^2 \frac{1}{\Lambda_j}\left(1-\frac{\phi(k; n^{-1}\Lambda_j)}{\phi(0;n^{-1}\Lambda_j)}\right) \right] \nonumber\\
    &= 2\sum_{j=1}^r \frac{P_{jj}}{\Lambda_j} \cdot \frac{\partial_k \phi(k; n^{-1}\Lambda_j)}{\phi(0; n^{-1}\Lambda_j)} \left[\frac{\phi(k;n^{-1}\Lambda_j)}{\phi(0; n^{-1}\Lambda_j)} \Big(\sigma^2 \Lambda_j + \tau^2\Big) - \tau^2  \right]. \label{eq:risk-derivative-mu0}
\end{align}
Note that since $\eta_k \le 1/(n^{-1} \Lambda_1)$ for all $k$, $\phi(k; n^{-1}\Lambda_j) / \phi(0; n^{-1} \Lambda_j)$ is a non-increasing function of $k$ by Assumption~\ref{assumption:differentiable-extension}, and hence $\partial_k \phi(k; n^{-1}\Lambda_j) / \phi(0; n^{-1} \Lambda_j) \leq 0$. Since $\Lambda_j \ge 0$ and $P_{jj} \geq 0$ (because $P$ is a positive semidefinite matrix), the sign of the derivative $\partial_k R_B(\beta_k)$ is determined by 
\[
    \frac{\phi(k;n^{-1}\Lambda_j)}{\phi(0; n^{-1}\Lambda_j)} \Big(\sigma^2 \Lambda_j + \tau^2\Big) - \tau^2.
\]
Therefore, by using the assumption~\eqref{eq:earlystopping_condition} that 
\[
    \lim_{k \to \infty} \frac{\phi(k;n^{-1}\Lambda_j)}{\phi(0; n^{-1}\Lambda_j)} < \frac{\tau^2}{\sigma^2\Lambda_j + \tau^2},
\]
we have that for sufficiently large $k$,
\[
    \partial_k R_B(\beta_k) \ge 2 \sum_{j=1}^{r} \frac{P_{jj}}{\Lambda_j} \cdot \frac{\partial_k \phi(k; n^{-1} \Lambda_j)}{\phi(0; n^{-1} \Lambda_j)} (\tau^2 - \tau^2) = 0.
\]
This shows that the derivative of the expected excess risk is eventually positive, from which we conclude that we should have stopped earlier to minimize the risk.

To obtain the lower bound on the Bayes excess risk, observe that we can minimize the expected excess risk independently in each eigendirection. Since $\phi$ is monotonic, we see that solving 
\[
   \frac{\phi(k_j;n^{-1}\Lambda_j)}{\phi(0; n^{-1}\Lambda_j)} = \frac{\tau^2}{\sigma^2\Lambda_j + \tau^2}
\]
for $k_j$ achieves a global minimum in the $j$th eigendirection for $j \le r$. By plugging this into the expression~\eqref{eq:earlystopping_1} for the Bayes excess risk for each $j \le r$, we obtain the lower bound as follows. For $j \le r$, the corresponding term in the summation is $P_{jj}$ times 
\begin{align*}
    \sigma^2 \left(\frac{\tau^2}{\sigma^2\Lambda_j + \tau^2}\right)^2 + \tau^2\frac{1}{\Lambda_j}\left(1-\frac{\tau^2}{\sigma^2\Lambda_j + \tau^2}\right)^2
    &= \sigma^2 \left(\frac{\tau^2}{\sigma^2\Lambda_j + \tau^2}\right)^2 + \tau^2\frac{1}{\Lambda_j}\left(\frac{\sigma^2\Lambda_j}{\sigma^2\Lambda_j + \tau^2}\right)^2 \\
    &= \frac{\sigma^2 \tau^2}{\sigma^2\Lambda_j + \tau^2}.
\end{align*}
For $j > r$, note that $\phi(k;n^{-1}\Lambda_j) / \phi(0; n^{-1}\Lambda_j) = 1$. This completes the proof.
\end{proof}

\begin{remark}
The only reason that Assumption~\ref{assumption:spherical} is needed is to address the cross terms in~\eqref{eq:earlystopping_1}. However, another condition that would lead the cross terms to vanish is if the matrix $P = V^T \Sigma V$ is diagonal. This is satisfied if the features are isotropic (i.e., $\Sigma = I$), or more generically, if $\Sigma$ and $\hat{\Sigma}$ are simultaneously diagonalizable by the same basis of eigenvectors $V$. The latter is similar to a common requirement in works studying the problem of covariate shift, where assumptions on the alignment between the eigenspaces of the covariance matrices of the training and test distributions are required~\cite{tripuraneni2021covariate}.  

If $P$ is diagonal, then a version of Theorem~\ref{thm:earlystopping} showing that early stopping is beneficial for minimizing the expected excess risk $R(\beta_k)$ holds under the assumption that for all $j \leq \mathrm{rank}(X)$,
\[
    \lim_{k \to \infty} \frac{\phi(k; n^{-1} \Lambda_j)}{\phi(0; n^{-1} \Lambda_j)} < \frac{\tau^2}{\Lambda_j (\tilde{\beta}_0 - \tilde{\beta}_*)_j^2 + \tau^2},
\]
and a lower bound on the expected excess risk is given by
\[
    R(\beta_k) \geq \sum_{j=1}^r P_{jj} \cdot \frac{\tau^2(\tilde{\beta}_0 - \tilde{\beta}_*)_j^2}{\Lambda_j(\tilde{\beta}_0 - \tilde{\beta}_*)_j^2 + \tau^2} + \sum_{j=r+1}^p P_{jj} \cdot (\tilde{\beta}_0 - \tilde{\beta}_*)_j^2.
\]
\end{remark}

Next, as a counterpart to Theorem~\ref{thm:earlystopping}, we shall provide sufficient conditions for when early stopping for the unpenalized least squares problem~\eqref{eq:ls-ridge} with regularization parameter $\mu = 0$ is \emph{not beneficial}.

\begin{theorem}[Early stopping converse] \label{thm:earlystoppingConverse}
Consider the same setup as Theorem~\ref{thm:earlystopping}. If the learning rate schedule $\{ \eta_k \}_{k \geq 1}$ is such that $\eta_k \le 1 / (n^{-1} \Lambda_1)$ for all $k$, and for all $j = 1, \ldots, r$ we have that 
\begin{equation} \label{eq:earlystoppingConverse_condition}
    \lim_{k \to \infty}\frac{\phi(k; n^{-1} \Lambda_j)}{\phi(0; n^{-1} \Lambda_j)} \ge \frac{\tau^2}{\Lambda_j \sigma^2 + \tau^2},
\end{equation}
then early stopping is not beneficial. 
\end{theorem}

\begin{proof}[Proof.]
The proof is the same as for Theorem~\ref{thm:earlystopping}, except that the inequality for $\partial_k R_B(\beta_k)$ is reversed, and we deduce that the derivative of the expected excess risk is always negative. Hence, early stopping is not beneficial since the Bayes excess risk can always be decreased by further iterations.
\end{proof}

\begin{remark}[Step size schedules and late generalization] \label{rem:grokking}
In summary, Theorems~\ref{thm:earlystopping} and~\ref{thm:earlystoppingConverse} show that the learning rate schedule $\{ \eta_k \}_{k \geq 1}$ affects whether early stopping is beneficial or not.
Theorem~\ref{thm:earlystopping} provides a sufficient condition for when early stopping is beneficial. In particular, if $\phi(k; \zeta) \to 0$ as $k \to \infty$ for all $\zeta$ so that the limit in the left-hand side of~\eqref{eq:earlystopping_condition} is zero, then we see that early stopping is \emph{always beneficial}, \emph{independent of the spectrum of the covariance matrices of the training and test data.} 
Examples of learning rate schedules that satisfy this assumption include constant learning rates ($\eta_k \equiv \eta < 1/(n^{-1} \Lambda_1)$), learning rates with linear decay ($\eta_k = \eta/k$), or more generally if the learning rates satisfy $\sum_{k = 1}^\infty \eta_k = \infty$.

We can interpret \emph{late generalization} or \emph{grokking} as the phenomenon where we want to keep training, even after overfitting the noise. On the other hand, when early stopping is beneficial, we have shown that we do not want to overfit the noise. Hence, we show that for many different learning rate schedules, linear models trained by gradient descent do not exhibit late generalization.

Theorem~\ref{thm:earlystoppingConverse} allows us to construct examples of learning rate schedules for which it is possible that early stopping \emph{is not beneficial}, namely, when the left-hand side of~\eqref{eq:earlystoppingConverse_condition} is non-zero and large enough, relative to $\tau^2$, $\sigma^2$, and the spectrum of $X^T X$.
Possible examples include fast decaying learning rates with polynomial decay ($\eta_k = \eta/k^m$ with $m > 1$) or exponential decay ($\eta_k = \eta^k$ with $\eta < 1$).
\end{remark}

Having built intuition in the ridgeless setting, we now characterize when early stopping is beneficial for solving the ridge regression problem~\eqref{eq:ls-ridge} with regularization parameter $\mu \geq 0$.

\begin{theorem}[Early stopping with ridge regularization] \label{thm:earlystopping-mu}
Let $X = U \Sigma_X V^T$ be any feature matrix with rank $r$, and $y = X \beta_* + \varepsilon$.
Recall that $\Lambda_1$ is the largest eigenvalue of $X^T X$.
Suppose that Assumptions~\ref{assumption:noise}, \ref{assumption:differentiable-extension}, and~\ref{assumption:spherical} hold.
Let $\beta_k$ be the parameters after $k$ steps of gradient descent for the ridge regression problem~\eqref{eq:ls-ridge} with regularization parameter $\mu \geq 0$, initialized at $\beta_0 = 0$.
If the learning rate schedule $\{ \eta_k \}_{k \geq 1}$ is such that $\eta_k \leq (\mu + n^{-1} \Lambda_1)^{-1}$ for all $k$, and for all $j = 1, \ldots, r$ we have that
\begin{equation} \label{eq:earlystopping-mu_condition1}
    \lim_{k \to \infty} \frac{\phi(k; \mu + n^{-1} \Lambda_j)}{\phi(0; \mu + n^{-1} \Lambda_j)} < \frac{\tau^2 - \sigma^2 \mu n}{\tau^2 + \sigma^2 \Lambda_j},
\end{equation}
then there exists a finite $T$ such that for all $k \geq T$, $R_B(\beta_k) \geq R_B(\beta_T)$. That is, early stopping is beneficial.

On the other hand, under the same assumptions, if for all $j = 1, \ldots, r$, we have that
\begin{equation} \label{eq:earlystopping-mu_condition2}
    \lim_{k \to \infty} \frac{\phi(k; \mu + n^{-1} \Lambda_j)}{\phi(0; \mu + n^{-1} \Lambda_j)} \geq  \frac{\tau^2 - \sigma^2 \mu n}{\tau^2 + \sigma^2 \Lambda_j},
\end{equation}
then early stopping is not beneficial.
\end{theorem}

The proof of Theorem~\ref{thm:earlystopping-mu} follows the same strategy as for the ridgeless case in Theorem~\ref{thm:earlystopping} but involves more technical calculations, and will be deferred to Appendix~\ref{app:deferred_proofs}.

\begin{figure}[!htb]
    \centering
    \includegraphics[width=0.49\linewidth]{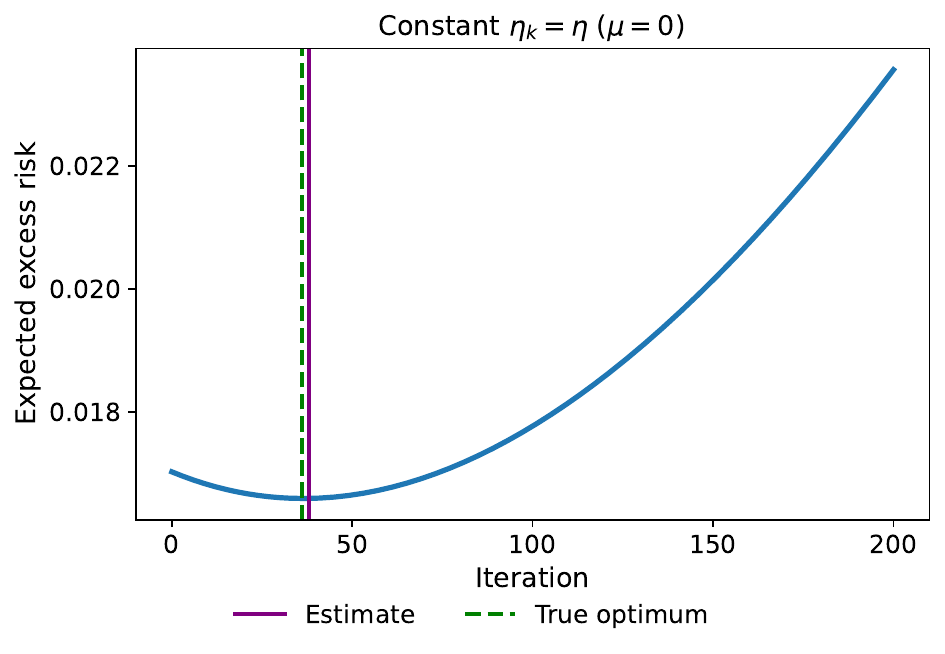}
    \includegraphics[width=0.49\linewidth]{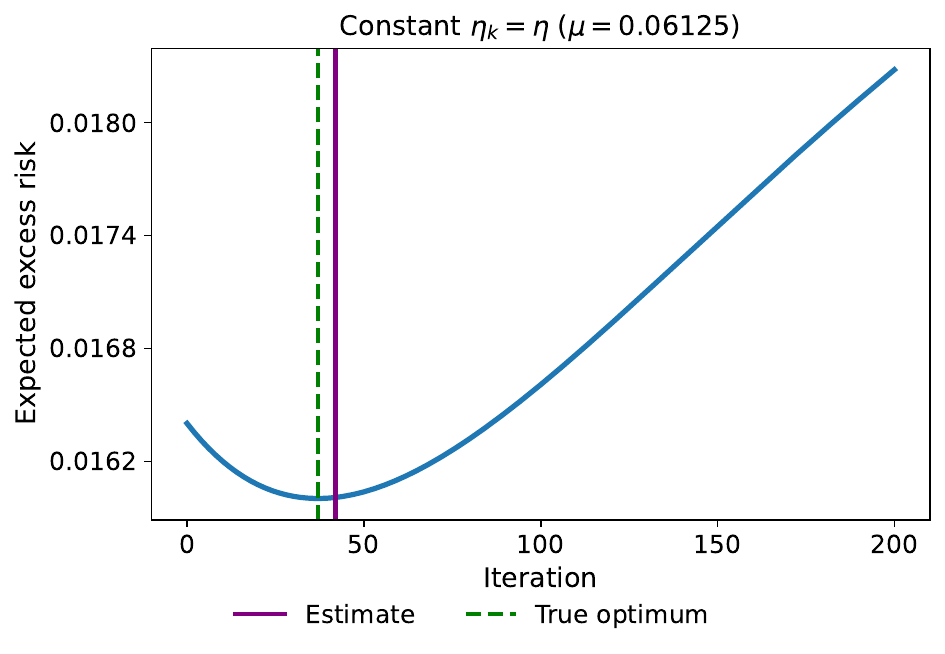}\\
    \includegraphics[width=0.49\linewidth]{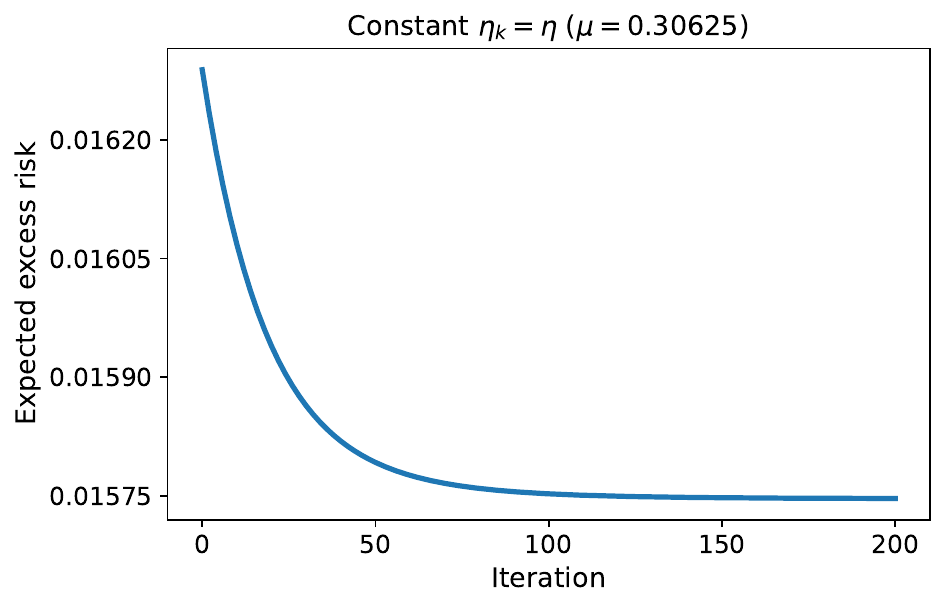}
    \includegraphics[width=0.49\linewidth]{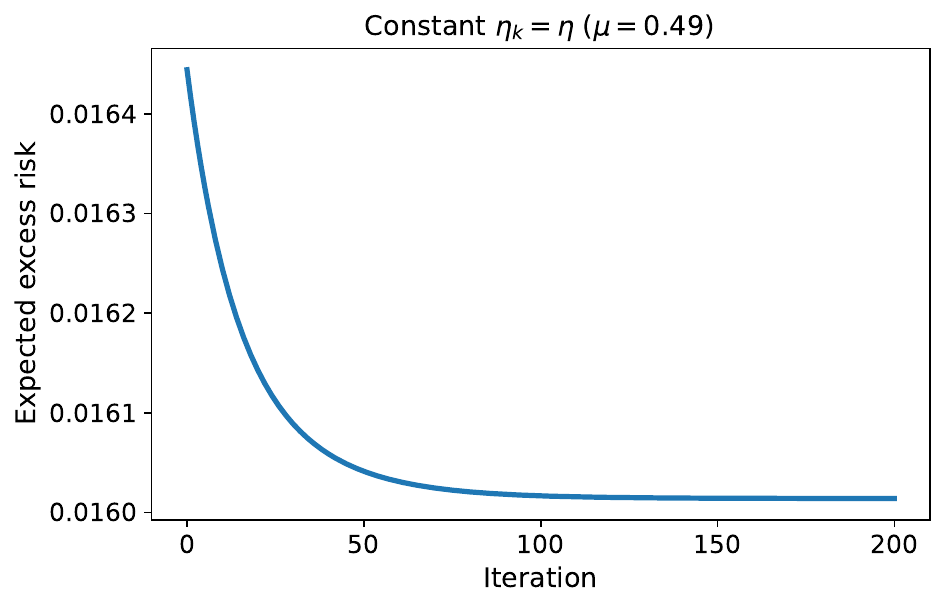}
    \caption{
    Bayes excess risk trajectories for gradient descent with constant learning rate $\eta = 0.01/(n^{-1}\Lambda_1)$ on four ridge regression problems with $n = 40$, $p = 100$, $\tau^2 = 0.35^2$, $\sigma^2 = 0.01$, $\mu^* = 0.30625$, and regularization parameter $\mu \in \{ 0, 0.06125, 0.30625, 0.49 \}$.
    When $\mu < \mu^*$ (top), early stopping is beneficial: the green dashed line shows the empirical optimal stopping time, and the purple line shows our estimated stopping time (see~\eqref{eq:when-to-stop-k} later).
    For $\mu \geq \mu^*$ (bottom), early stopping is not beneficial, which aligns with the predictions of the theory.
    }
    \label{fig:early-lam}
\end{figure}

\begin{remark}[Early stopping for ridge regression] \label{rem:grokking-ridge} Theorem~\ref{thm:earlystopping-mu} extends Theorems~\ref{thm:earlystopping} and~\ref{thm:earlystoppingConverse} by providing conditions under which early stopping remains beneficial (or not) even in the presence of explicit ridge regularization. This reveals an interesting implication. Specifically, if $\mu \ge \tau^2 / (n \sigma^2)$, then early stopping is never beneficial. In other words, if the explicit ridge regularization is too strong, then the benefits of early stopping diminish. 
This particular result was also established concurrently by Stark and Steinerberger~\cite[Theorem~2.5]{StarkSteinerberger2025} for gradient descent with constant learning rates.

Suppose that we train our model until convergence for the ridge regularized problem. Prior work~\cite{nakkiran2021optimal} has shown that $\mu^* = \tau^2 / (n \sigma^2)$ is an optimal ridge regularization parameter that minimizes the generalization error.
Hence, if we pick $\mu = \mu^*$, then early stopping is not beneficial! However, this does not imply that the expected excess risk cannot be improved by using \emph{both} ridge regularization and early stopping. In particular, the optimal early stopped solution with $\mu < \mu^*$ may outperform the converged solution with $\mu = \mu^*$.

We validate this numerically in Figure~\ref{fig:early-lam}. In this experiment, we sample $n = 40$ data points $x \sim z \Sigma^{1/2} \in \mathbb{R}^{p}$ with $p=100$, where $z \sim \mathcal{N}(0, p^{-1} I)$ and $\Sigma$ is a diagonal matrix with entries $\Sigma_{jj} = j^{-2}$. We sample $800$ parameters $\beta_* \sim \mathcal{N}(0, p^{-1} I)$ and independent mean-zero Gaussian noise with variance $\tau^2 = 0.35^2$; i.e., we work in the spherical prior setting with $\sigma^2 = 0.01$. Thus, the optimal ridge regularization parameter is $\mu^* = \tau^2 / (n \sigma^2) = 0.30625$.
We consider gradient descent with constant learning rate $\eta_k \equiv \eta = 0.01 / (n^{-1} \Lambda_1)$, where $\Lambda_1$ is the largest eigenvalue of $X^T X$, for four ridge regression problems with regularization parameters
\[
    \mu \in \{ 0, 0.5 \tau^2, 2.5 \tau^2, 4 \tau^2 \}
    = \{ 0, 0.06125, 0.30625, 0.49 \}.
\]
We see that for $\mu < \mu^*$, early stopping is beneficial, while for $\mu \geq \mu^*$, early stopping is not beneficial. Additionally, we see that for $\mu < \mu^*$, the early-stopped risk is similar to the converged risk obtained with the optimal ridge parameter $\mu = \mu^*$.
\end{remark}

\section{Optimal stopping time estimate} \label{sec:time-estimate}

Having shown that early stopping is beneficial for a wide variety of learning rate schedules, we shall now provide an estimate for the optimal stopping time.
Recall that we have the expression~\eqref{eq:risk-derivative-mu0} for the derivative of the Bayes excess risk in the ridgeless case from the proof of Theorem~\ref{thm:earlystopping}. More generally, in the proof of Theorem~\ref{thm:earlystopping-mu} (see~\eqref{eq:risk-derivative}), we show that the derivative of the Bayes excess risk with ridge regularization parameter $\mu \geq 0$ is given by
\[
    \partial_k R_B(\beta_k) = 2 \sum_{j=1}^{r} P_{jj} \cdot \partial_k \Phi_j(k) \cdot \frac{\Lambda_j}{(\mu n + \Lambda_j)^2} \left[\Phi_j(k) \cdot (\sigma^2 \Lambda_j + \tau^2) + (\sigma^2 \mu n - \tau^2) \right].
\]
Solving the above for a critical point is quite challenging. Therefore, similar to what was done in~\cite{AdSaSo2020}, we can determine the optimal stopping time for each eigendirection by setting each summand to zero individually.
That is, recalling that $\Phi_j(k) = \phi(k; \mu + n^{-1} \Lambda_j) / \phi(0; \mu + n^{-1} \Lambda_j)$, for each $j = 1, \ldots, \mathrm{rank}(r)$, we need to find $k_j$ such that
\begin{equation}
    \prod_{i=1}^k (1 - \eta_i (\mu + n^{-1} \Lambda_j))
    = \frac{\tau^2 - \sigma^2 \mu n}{\tau^2 + \sigma^2 \Lambda_j}.
\end{equation}
We may assume that $\mu \le \tau^2 / (n \sigma^2)$, otherwise early stopping is never beneficial (Remark~\ref{rem:grokking-ridge}). By taking the logarithm of both sides and using the first order expansion of $\log(1+x)$, we obtain
\[
    -\log\left(\frac{\tau^2 + \sigma^2 \Lambda_j}{\tau^2 - \sigma^2 \mu n}\right) = \sum_{i=1}^{k_j}\log(1 - \eta_i (\mu + n^{-1} \Lambda_{j})) \approx -(\mu + n^{-1} \Lambda_{j}) \cdot \sum_{i=1}^{k_j} \eta_i.
\]
Hence, the choice of $k_j$ should satisfy
\begin{equation} \label{eq:when-to-stop}
    \sum_{i=1}^{k_j} \eta_i \approx \frac{\log\left(\frac{\tau^2 + \sigma^2 \Lambda_j}{\tau^2 - \sigma^2 \mu n}\right)}{\mu + n^{-1} \Lambda_{j}}.
\end{equation}
Note that as $\mu$ approaches $\tau^2 / (n \sigma^2)$, the optimal stopping time goes to infinity. 

The expression~\eqref{eq:when-to-stop} provides a stopping time estimate for each eigendirection. A single estimate can be obtained by using the mean of the eigenvalues; i.e., replacing $\Lambda_j$ in~\eqref{eq:when-to-stop} by the mean $\bar{\Lambda} := r^{-1} \sum_{j=1}^r \Lambda_j$.
This should be understood as a representative-eigenvalue approximation. Since early stopping acts as a spectral regularizer, the optimal stopping time is naturally eigendirection-dependent. However, when the spectrum is sufficiently concentrated, one might expect the bulk behavior to be well captured by a typical eigenvalue.
This is analogous to the approximation in~\cite{AdSaSo2020}, where the per-mode stopping criterion is converted into a single stopping time by evaluating it at a modal eigenvalue.

Putting everything together, Theorem~\ref{thm:earlystopping-mu} and the heuristics discussed above lead to the following estimate of the optimal stopping time:
\begin{equation} \label{eq:when-to-stop-k}
    \hat{k} = \argmin\limits_k \left\{\sum_{i=1}^{k} \eta_i > \frac{\log\left(\frac{\tau^2 + \sigma^2 \bar{\Lambda}}{\tau^2 - \sigma^2 \mu n}\right)}{\mu + n^{-1} \bar{\Lambda}} \right\}.
\end{equation}
Here, $\bar{\Lambda} = \Tr(X^T X)/ \mathrm{rank}(X^T X)$, recalling that $\mu$ is the ridge regularization parameter, $\tau^2$ is the variance of each entry of the residual under Assumption~\ref{assumption:noise} (i.e., the strength of the noise), and $\sigma^2$ is the variance of $\beta_0 - \beta_*$ under the prior in Assumption~\ref{assumption:spherical} (i.e., the strength of the signal). We test the performance of this estimate empirically in the next section.

The closest comparison for our proposed stopping time is by Advani et al.~\cite{AdSaSo2020}. If $\mu = 0$ and constant step sizes $\eta_i \equiv \eta$ are used, then~\eqref{eq:when-to-stop} leads to the estimate
\[
    \hat{k}_j \approx \frac{\log\left(1 + \frac{\sigma^2}{\tau^2}\Lambda_{j} \right)}{\eta n^{-1}} \Lambda_{j}.
\]
This corresponds to the estimate~\cite[Eq.\ (16)]{AdSaSo2020} for the ridgeless, gradient flow case.
Our estimate~\eqref{eq:when-to-stop} extends this estimate to discrete gradient descent with arbitrary spectra, non-constant learning rate schedules through the cumulative budget $\sum_{i=1}^k \eta_i$, and explicit ridge regularization.

We emphasize that~\eqref{eq:when-to-stop-k} is a Bayes risk estimate. The exact optimal stopping time for any fixed realization of $\beta_*$ will depend on the alignment of $\beta_*$ with the eigenspaces of $X^T X$. Under Assumption~\ref{assumption:spherical}, this dependence is averaged out and replaced by the scalar signal strength $\sigma^2$. Thus, the estimate does not require knowledge of the underlying signal $\beta_*$, although it does require estimates of population quantities such as the signal-to-noise ratio $\sigma^2 / \tau^2$.

\subsection{Experimental validation}

We run experiments on real feature matrices from the MNIST~\cite{lecun1998gradient} and CIFAR10~\cite{krizhevsky2009learning} datasets, which are classic collections of handwritten digits and color images. For each dataset, we sample $n = 1,000$ images and flatten the image vectors to form the feature matrix $X \in \mathbb{R}^{n\times p}$. For MNIST, $p = 784$, while for CIFAR10, $p = 3,072$; this situates the experiments in the underparameterized and overparameterized regimes, respectively. We sample $\beta_* \sim \mathcal{N}(0, p^{-1} I_p)$, which is in the spherical prior setting with $\sigma^2= p^{-1}$, and generate responses
\[
    y = X \beta_* + \varepsilon,
    \qquad
    \text{where } \varepsilon = (\varepsilon_i)_{i=1}^n \text{ with } \varepsilon_i \sim \mathcal{N}(0,\tau^2) \text{ and } \tau = 1.
\]
We estimate the Bayes excess risk by averaging over $800$ independent draws of $\beta_*$ and $\varepsilon$.
For each dataset, we run gradient descent for $T = 500$ iterations with base learning rate $\eta = 0.9 / (n^{-1} \Lambda_1)$, where $\Lambda_1$ is the largest eigenvalue of $X^T X$.
We compare six learning rate schedules: 
\begin{enumerate}
    \item a constant schedule $\eta_k=\eta$;
    \item a polynomial schedule $\eta_k=\eta/k^{0.4}$;
    \item a linearly decaying schedule $\eta_k=\max(\eta-(k-1)\eta/300,0)$;
    \item an exponentially decaying schedule $\eta_k=\eta\cdot 0.9825^{k-1}$;
    \item a cosine annealing schedule~\eqref{eq:cosine_annealing} with minimum rate $0.05 \eta$, maximum rate $\eta$, and period $200$;
    \item and a random schedule with learning rates sampled uniformly from $[0,3\eta]$.
\end{enumerate}
We repeat this experiment with different ridge regularization parameters $\mu \in [0, 0.5 \mu^*, 1.5 \mu^*]$, where $\mu^* = \tau^2 / (n \sigma^2)$ is the optimal ridge regularization parameter.

\begin{figure}[!htb]
    \centering
    \includegraphics[width=\textwidth, trim={0 1.5cm 0 0}, clip]{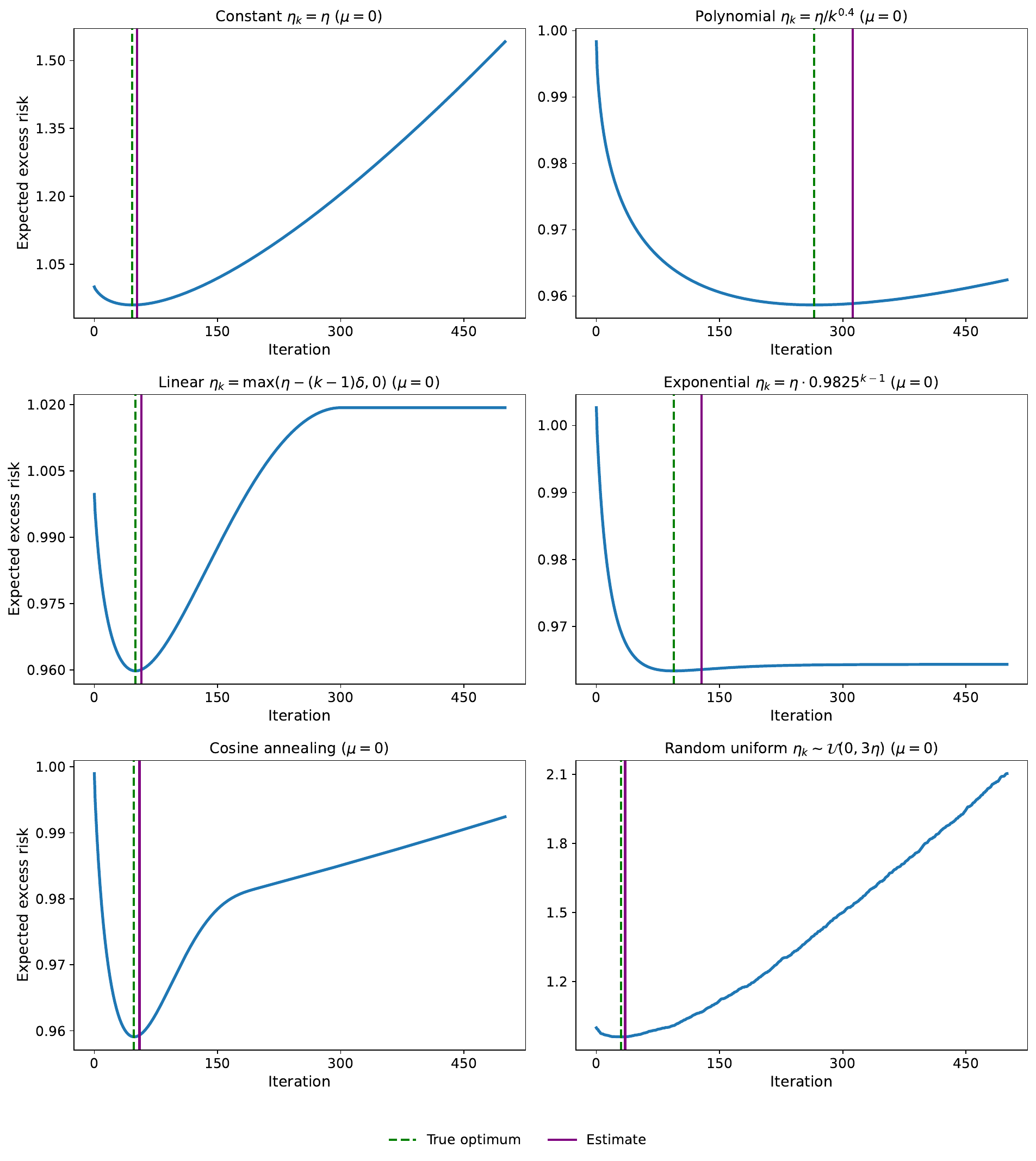}
    \caption{
    Bayes excess risk trajectories for the unregularized least squares problem ($\mu = 0$) with the MNIST dataset. The six panels compare different learning rate schedules. The green dashed line shows the empirical optimal stopping time, and the purple line shows our estimated stopping time~\eqref{eq:when-to-stop-k}.
    }
    \label{fig:real-mnist-mu0}
\end{figure}

\begin{figure}[!htb]
    \centering
    \includegraphics[width=\textwidth, trim={0 1.5cm 0 0}, clip]{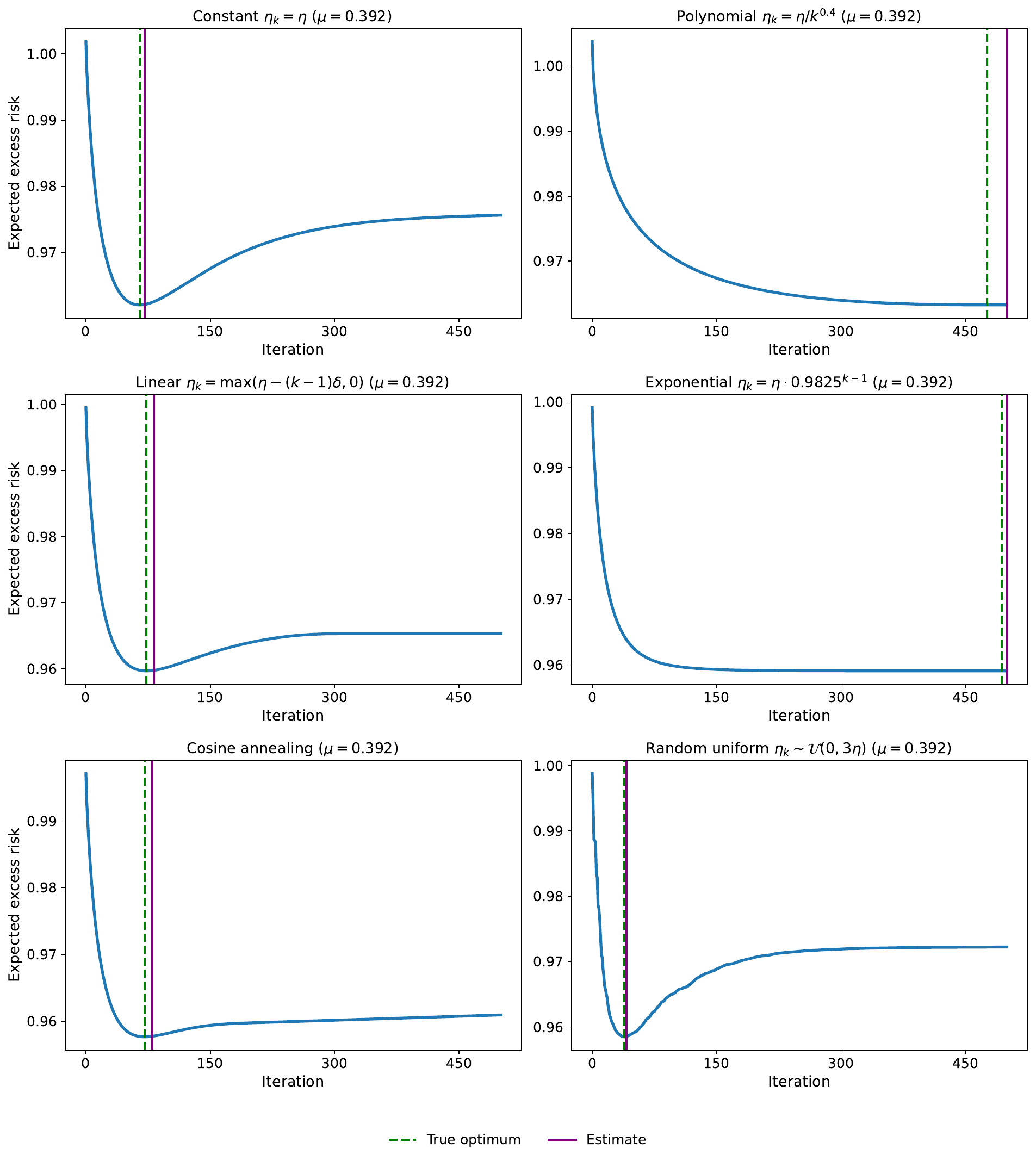}
    \caption{
    Bayes excess risk trajectories for the ridge-regularized least squares problem with $\mu = 0.392$ with the MNIST dataset. This corresponds to $\mu = \mu^* / 2$, where $\mu^* = 0.784$ is the optimal ridge parameter. The six panels compare different learning rate schedules. The green dashed line shows the empirical optimal stopping time, and the purple line shows our estimated stopping time~\eqref{eq:when-to-stop-k}.
    }
    \label{fig:real-mnist-mu-half}
\end{figure}

\begin{figure}[!htb]
    \centering
    \includegraphics[width=\textwidth]{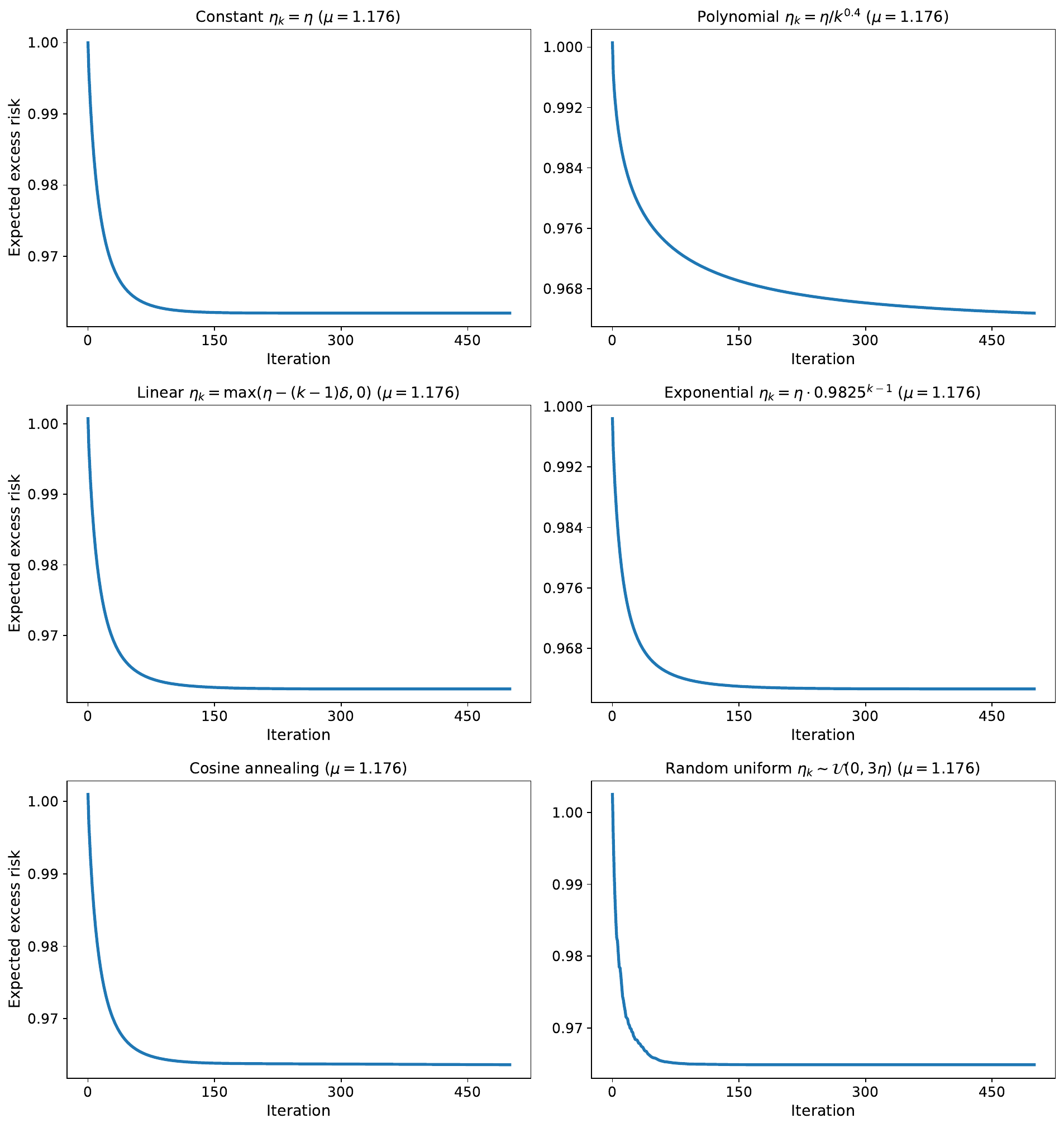}
    \caption{
    Bayes excess risk trajectories for the ridge-regularized least squares problem with $\mu = 1.176$ with the MNIST dataset. This corresponds to $\mu = 1.5\mu^*$, where $\mu^* = 0.784$ is the optimal ridge parameter. The six panels compare different learning rate schedules. Since $\mu > \mu^*$, the theory predicts that early stopping is not beneficial, and accordingly no lines for the optimal or estimated stopping time are plotted.
    }
    \label{fig:real-mnist-mu-large}
\end{figure}

The results for the MNIST dataset are consistent with the behavior predicted by our theory:
\begin{itemize}
    \item Figure~\ref{fig:real-mnist-mu0}: In the unregularized case ($\mu=0$), early stopping is beneficial across most learning rate schedules except for the exponentially decaying schedule, which decays too quickly. The estimated stopping times~\eqref{eq:when-to-stop-k} are consistently close to the empirical optimal stopping times. When there is a larger difference (e.g., for the polynomial or exponentially decaying schedules), the stopped risks are very similar. 
    \item Figure~\ref{fig:real-mnist-mu-half}: When $\mu > 0$ but $\mu < \mu^*$, the same qualitative behavior persists, although the amount of regularization already supplied by the ridge penalty reduces the improvement obtained from stopping early.
    \item Figure~\ref{fig:real-mnist-mu-large}: In contrast, when $\mu > \mu^*$ the risk curves are monotonically decreasing and training to convergence is optimal. This is in agreement with the prediction that early stopping is not beneficial once the explicit ridge regularization exceeds the optimal ridge parameter (Remark~\ref{rem:grokking}).
\end{itemize}
These experiments also illustrate that the iteration at which the optimum occurs can vary substantially across learning rate schedules.
The same experiments were run using the CIFAR10 dataset, and we can observe the same qualitative results in Figures~\ref{fig:real-cifar-mu0}--\ref{fig:real-cifar-mu-large}.

\begin{figure}[!htb]
    \centering
    \includegraphics[width=\textwidth, trim={0 1.5cm 0 0}, clip]{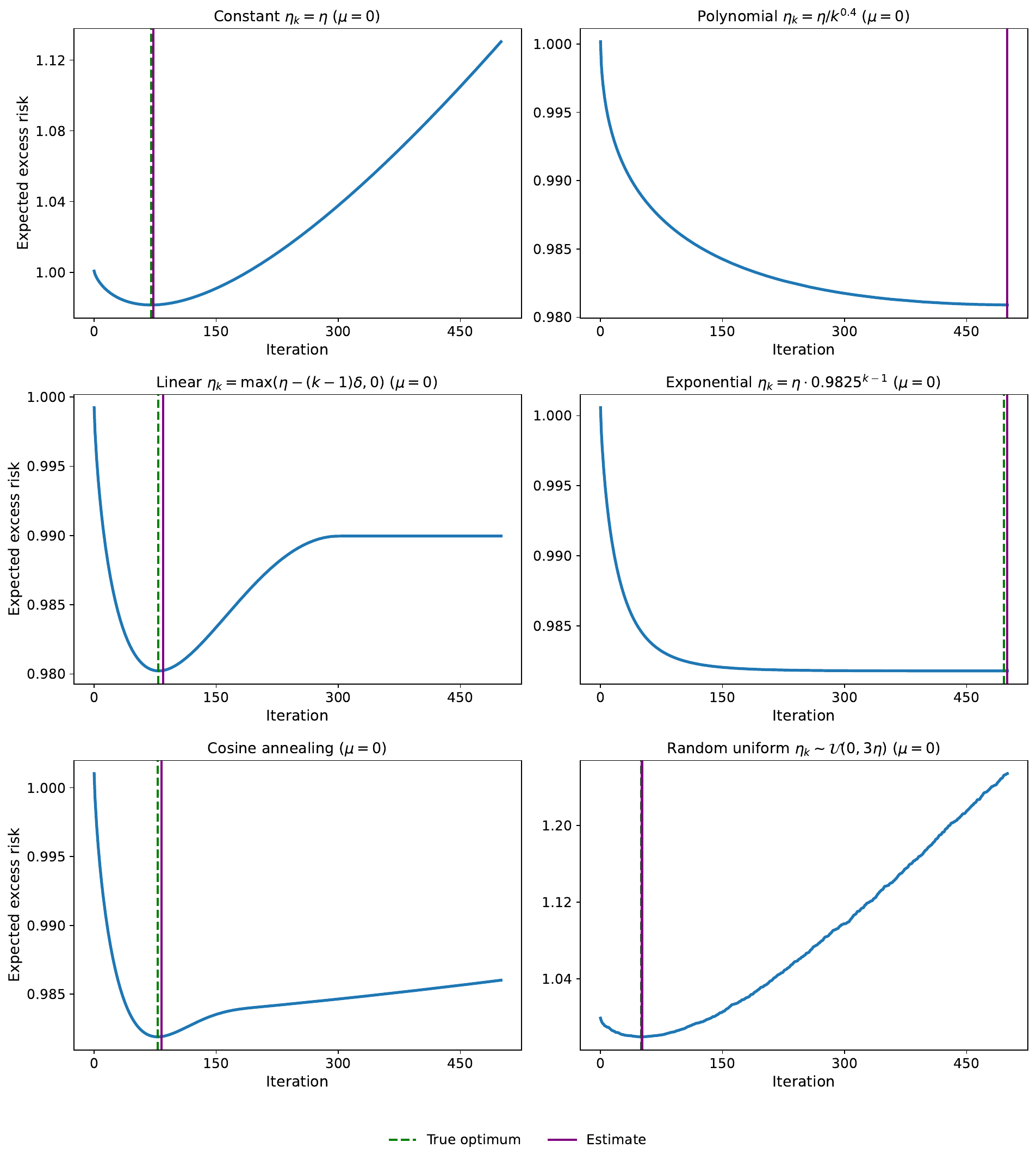}
    \caption{
    Bayes excess risk trajectories for the unregularized least squares problem ($\mu = 0$) with the CIFAR10 dataset. The six panels compare different learning rate schedules. The green dashed line shows the empirical optimal stopping time, and the purple line shows our estimated stopping time~\eqref{eq:when-to-stop-k}.
    }
    \label{fig:real-cifar-mu0}
\end{figure}

\begin{figure}[!htb]
    \centering
    \includegraphics[width=\textwidth, trim={0 1.5cm 0 0}, clip]{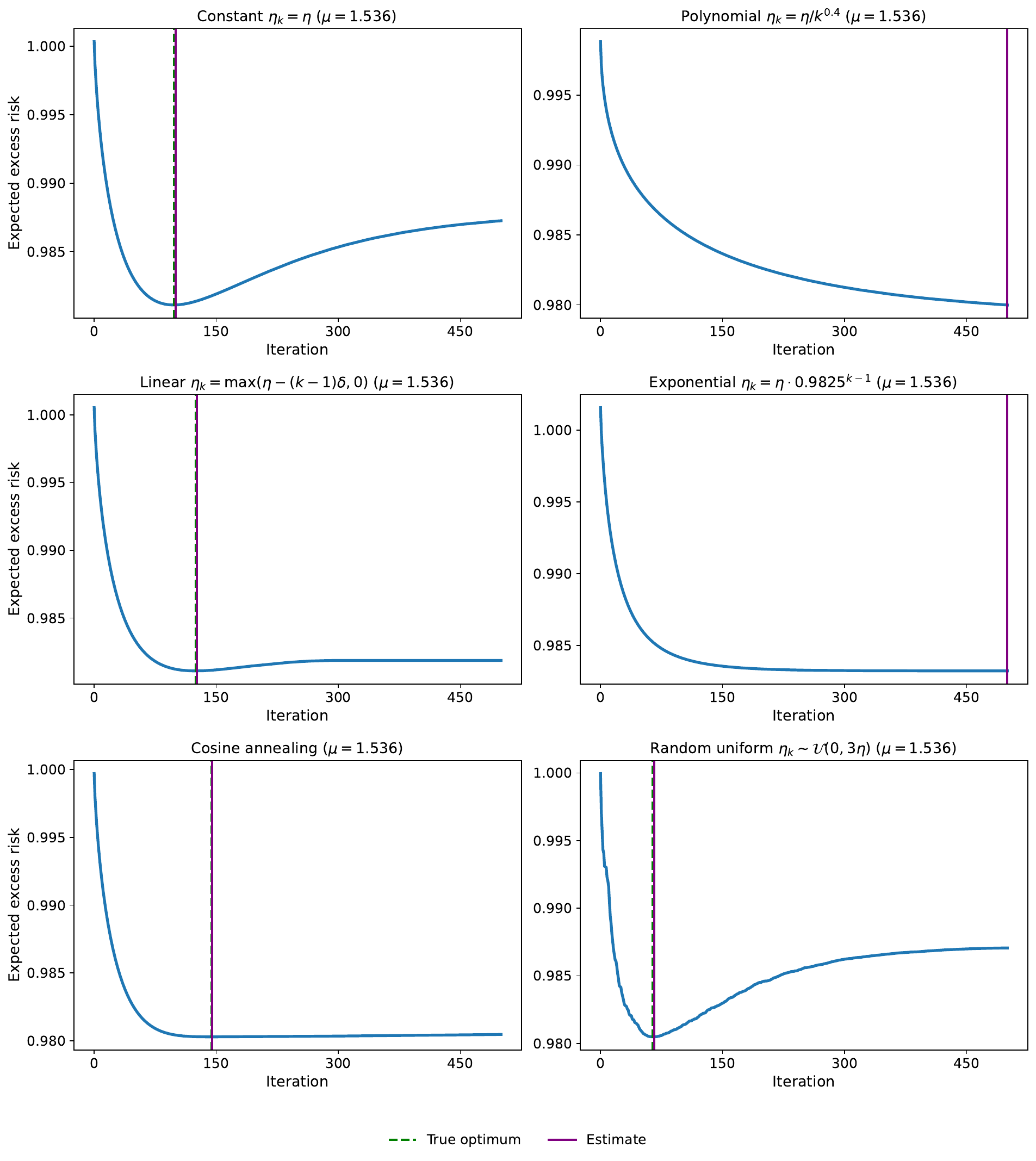}
    \caption{
    Bayes excess risk trajectories for the ridge-regularized least squares problem with $\mu = 1.536$ with the CIFAR10 dataset. This corresponds to $\mu = \mu^* / 2$, where $\mu^* = 3.072$ is the optimal ridge parameter. The six panels compare different learning rate schedules. The green dashed line shows the empirical optimal stopping time, and the purple line shows our estimated stopping time~\eqref{eq:when-to-stop-k}.
    }
    \label{fig:real-cifar-mu-half}
\end{figure}

\begin{figure}[!htb]
    \centering
    \includegraphics[width=\textwidth]{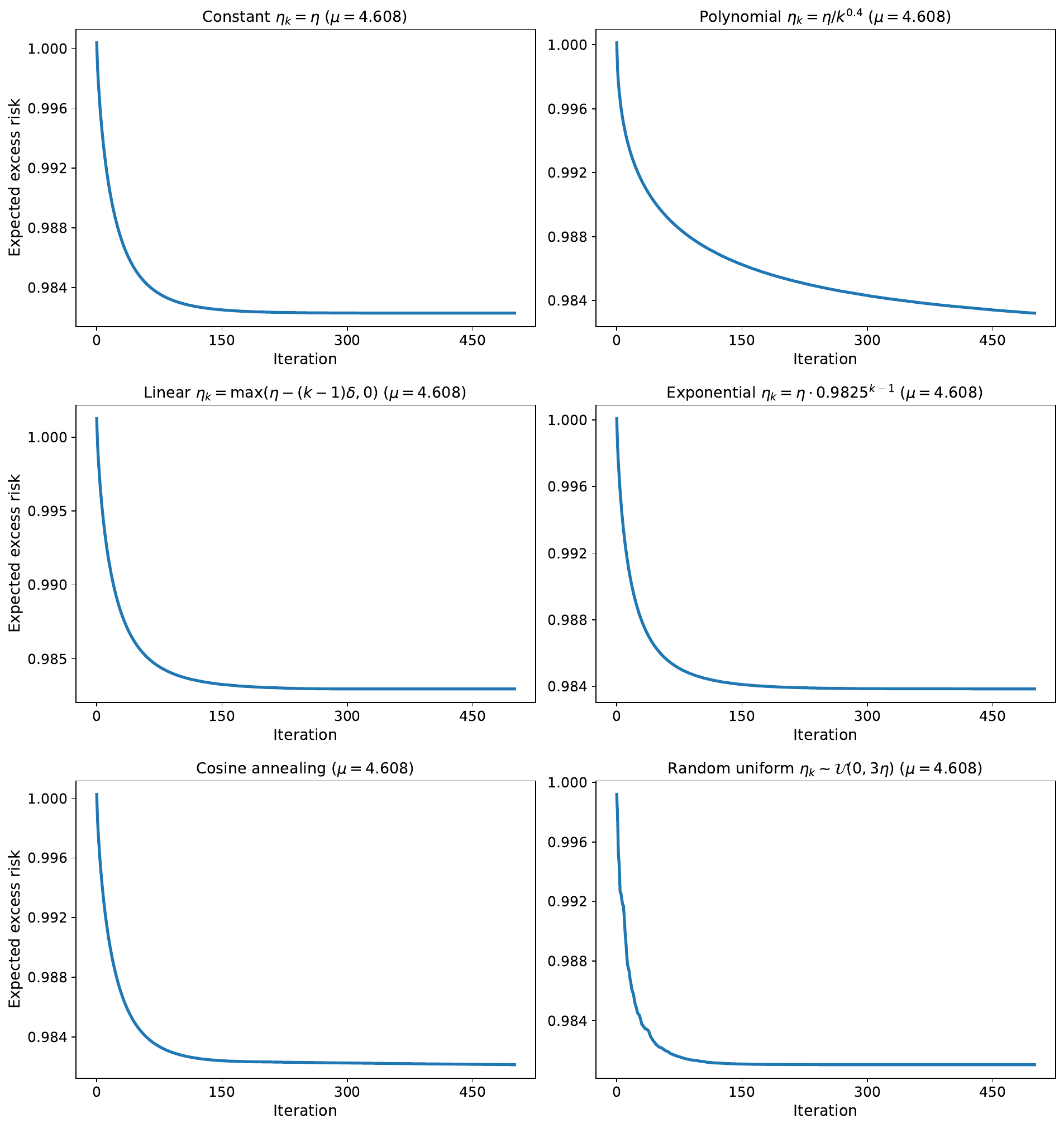}
    \caption{
    Bayes excess risk trajectories for the ridge-regularized least squares problem with $\mu = 4.608$ with the CIFAR10 dataset. This corresponds to $\mu = 1.5\mu^*$, where $\mu^* = 3.072$ is the optimal ridge parameter. The six panels compare different learning rate schedules. Since $\mu > \mu^*$, the theory predicts that early stopping is not beneficial, and accordingly no lines for the optimal or estimated stopping time are plotted.
    }
    \label{fig:real-cifar-mu-large}
\end{figure}

\paragraph{Violating assumptions.}
The random learning rate schedule provides a useful stress test for the assumptions behind the theory. In particular, it should not satisfy the monotone differentiability condition in Assumption~\ref{assumption:differentiable-extension}. We see that the risk curves with random learning rates are less smooth. Nevertheless, the qualitative behavior is still largely governed by the cumulative budget $\sum_{i=1}^k \eta_i$.

We also clarify that if either Assumption~\ref{assumption:noise} or~\ref{assumption:spherical} is violated, then the exact trajectory formula (Proposition~\ref{prop:betak}) and the equivalence to generalized ridge regression (Theorems~\ref{thm:equivalence1} and~\ref{thm:equivalence2}) still hold. However, the estimate~\eqref{eq:when-to-stop-k} the optimal stopping time may not be accurate.

\nocite{pytorch2019}

\clearpage

\section{Future work and conclusions}

In this paper, we analyzed the discrete dynamics of gradient descent for linear regression with generic data and learning rate schedules. By determining expressions for the exact trajectory of the parameters, we provided various results that formalize the intuition that early stopping is similar to $L_2$ regularization. Furthermore, we established general conditions on the learning rate and spectrum of the sample covariance of the feature matrix that show whether early stopping is beneficial or not. Finally, we provided an estimate for the optimal stopping time, which we verified empirically.

An important direction for future work is to extend the results to the non-linear case. While the current work is limited to linear models, we believe that our approach can serve as a foundation for more complex models, such as multi-layer neural networks.
For example, a simple case that may be tractable is the analysis of two-layer neural networks with layer-wise training. Existing works such as~\cite{moniri2024theory, wang2024nonlinear} describe how the spectrum of the features learned by the first layer evolves during training. Once the spectrum is known, our framework can be used to analyze the risk for the overall model by introducing two time parameters $T_1$ and $T_2$, corresponding to the number of steps each layer is trained for, which can then be optimized to minimize the risk. Understanding the dynamics of jointly training both layers is a significant and challenging open problem.

Another direction is to understand how to choose the learning rate schedule itself. In this paper, we study the effect of early stopping given  a fixed schedule, and our stopping time estimate identifies when that schedule has accumulated an appropriate amount of learning. However, different schedules can move through the optimization path in different ways, and they may not always lead to the same optimal risk. A natural next step is to develop criteria for comparing different schedules, for example by optimizing over families of schedules or by using the exact risk formula to choose schedules that balance statistical performance and computational cost.

Furthermore, it would also be of interest to analyze early stopping for other training algorithms, such as stochastic gradient descent. For example, recent work~\cite{LokSonthaliaRebrova2025} shows that the dynamics of mini-batch gradient descent evolves in an analogous way as full-batch gradient descent, depending on a modified covariance matrix that encodes dependencies between the mini-batches.
Another natural direction is to extend the theory to adaptive methods. The results in this paper apply to learning rate schedules that can be treated as fixed after conditioning on the data, independent of the optimization trajectory. However, many adaptive methods choose step sizes using the current iterates, gradients, or residuals. Extending the analysis to such adaptive schedules, and further to coordinate-wise methods such as AdaGrad and Adam, would be a natural direction for future work.

Finally, our analysis shows that the optimal stopping time is, in general, different for each eigendirection of the covariance of the features. This is impossible to implement in practice with the usual gradient descent algorithm. However, it would be interesting to study principled methods that augment the learning dynamics in the later stages so as to enhance the movement along the directions with larger optimal stopping times.

\section*{Acknowledgments}

This work is supported in part by funds from the National Science Foundation NSF DMS \#2309685 (ER and JL). The authors thank the anonymous reviewers for their careful reading and helpful suggestions that have substantially improved the presentation of the paper.

\printbibliography

\appendix

\section{Deferred proofs}
\label{app:deferred_proofs}

In this section, we will provide the proofs for the results that have been deferred, namely Proposition~\ref{prop:betak}, Proposition~\ref{prop:gen-risk-betak-mu}, and Theorem~\ref{thm:earlystopping-mu}.

\subsection{Proof of Proposition~\ref{prop:betak}.}

First, we will prove an identity for sums involving the step sizes $\eta_i$ and the function $\phi$ from Definition~\ref{defn:phi}.

\begin{lemma}[Technical sum identity] \label{lem:sum}
For any non-zero real number $\zeta$, we have
\[
    \sum_{i=1}^k \eta_i \frac{1}{\phi(i; \zeta)} = \frac{1}{\zeta} \left( \frac{1}{\phi(k; \zeta)} - \frac{1}{\phi(0; \zeta)} \right).
\]
\end{lemma}

\begin{proof}[Proof.]
For notational simplicity, we will write $\phi(i) = \phi(i; \zeta)$. We will prove the identity using induction.
For the base case with $k = 1$, the left hand side is $\eta_1 / \phi(1)$, and the right hand side is equal to
\[
    \frac{1}{\zeta}\left(\frac{1}{\phi(1)} - \frac{1}{\phi(0)}\right) = \frac{1}{\zeta}\left(\frac{1 - (1-\eta_1 \zeta}){\phi(1)}\right) = \frac{\eta_1}{\phi(1)}.
\]
Hence, the base case holds.
For the inductive step, assuming that the identity holds for some $k$, we have
\begin{multline*}
    \sum_{i=1}^{k+1}\eta_i \frac{1}{\phi(i)}
    = \frac{\eta_{k+1}}{\phi(k+1)} + \sum_{i=1}^k \eta_i \frac{1}{\phi(i)}
    = \frac{\eta_{k+1}}{\phi(k+1)} + \frac{1}{\zeta}\left(\frac{1}{\phi(k)} - \frac{1}{\phi(0)}\right) \\
    = \frac{\eta_{k+1} \zeta}{\phi(k+1) \zeta} + \frac{(1 - \eta_{k+1} \zeta)}{\phi(k+1) \zeta} - \frac{1}{\zeta \phi(0)}
    = \frac{1}{\zeta}\left(\frac{1}{\phi(k+1)} - \frac{1}{\phi(0)}\right).
\end{multline*}
This shows that the identity also holds for $k+1$. This completes the proof.
\end{proof}

\begin{proof}[Proof of Proposition~\ref{prop:betak}.]
From~\eqref{eq:update}, the gradient update is given by
\begin{align*}
    \beta_{k+1} 
    &= \beta_k - \frac{\eta_{k+1}}{n}X^T(X\beta_k - X\beta_* - \varepsilon) - \eta_{k+1}\mu \beta_k.
\end{align*}
Subtracting $\beta_*$ from both sides, we obtain
\begin{align*}
    \beta_{k+1} - \beta_* &= (1-\eta_{k+1}\mu)(\beta_k - \beta_*) - \frac{\eta_{k+1}}{n}X^TX(\beta_{k} - \beta_*) +  \frac{\eta_{k+1}}{n}(X^T\varepsilon - \mu n \beta_*)\\
    &= \left[(1-\eta_{k+1}\mu)I - \frac{\eta_{k+1}}{n}X^TX\right](\beta_{k} - \beta_*) +  \frac{\eta_{k+1}}{n}\left(X^T\varepsilon - \mu n \beta_*\right)\\
    &= V\left[(1-\eta_{k+1}\mu)I - \frac{\eta_{k+1}}{n}\Lambda\right]V^T(\beta_k - \beta_*) +  \frac{\eta_{k+1}}{n}V\left(\Sigma_X^TU^T\varepsilon - \mu n V^T\beta_*\right).
\end{align*}
By multiplying both sides by $V^T$ to perform a change of basis and recalling the notations $\tilde{\beta}_k = V^T \beta_k$, $\tilde{\beta}_* = V^T\beta_*$, and $\check{\varepsilon} = U^T \varepsilon$ from~\eqref{eq:in-eigbasis}, we have
\[
    \tilde{\beta}_{k+1} - \tilde{\beta}_* = \left[(1-\eta_{k+1}\mu)I - \frac{\eta_{k+1}}{n}\Lambda\right](\tilde{\beta}_k-\tilde{\beta}_*) + \frac{\eta_{k+1}}{n}\left(\Sigma_X^T\check{\varepsilon} - \mu n \tilde{\beta}_*\right).
\]
We can rewrite this as the matrix equation 
\[
    \begin{bmatrix} \tilde{\beta}_{k+1} - \tilde{\beta}_* \\ \Sigma_X^T\check{\varepsilon} - \mu n \tilde{\beta}_* 
    \end{bmatrix} = \begin{bmatrix} \left[(1-\eta_{k+1}\mu)I - \frac{\eta_{k+1}}{n}\Lambda\right] & \frac{\eta_{k+1}}{n}I \\ 0 & I \end{bmatrix} \begin{bmatrix} \tilde{\beta}_{k} - \tilde{\beta}_* \\ \Sigma_X^T\check{\varepsilon} - \mu n \tilde{\beta}_*
    \end{bmatrix}.
\]
Note that the product of block upper triangular matrices is also block upper triangular.
Thus, we have
\begin{align*}
    \begin{bmatrix} \tilde{\beta}_{k+1} - \tilde{\beta}_* \\ \Sigma_X^T\check{\varepsilon}  - \mu n \tilde{\beta}_* 
    \end{bmatrix}
    &= \prod_{i=1}^{k+1}\begin{bmatrix} \left[(1-\eta_{i}\mu)I - \frac{\eta_{i}}{n}\Lambda\right] & \frac{\eta_{i}}{n}I \\ 0 & I \end{bmatrix} \begin{bmatrix} \tilde{\beta}_{0} - \tilde{\beta}_* \\ \Sigma_X^T\check{\varepsilon}  - \mu n \tilde{\beta}_*
    \end{bmatrix} \\
    &= \begin{bmatrix}\prod_{i=1}^{k+1}\left[(1-\eta_{i}\mu)I - \frac{\eta_{i}}{n}\Lambda\right] & \sum_{i=1}^{k+1}\left(\prod_{j=i+1}^{k+1} \left[(1-\eta_{j}\mu)I - \frac{\eta_{j}}{n}\Lambda\right]\right)\frac{\eta_{i}}{n}I \\ 0 & I \end{bmatrix} \begin{bmatrix} \tilde{\beta}_{0} - \tilde{\beta}_* \\ \Sigma_X^T\check{\varepsilon} - \mu n \tilde{\beta}_*
    \end{bmatrix}.
\end{align*}
With the notations from Definition~\ref{defn:phi}, from \eqref{eq:defn_Phi_matrix2}, we have 
\[
    \begin{bmatrix} \tilde{\beta}_{k+1} - \tilde{\beta}_* \\ \Sigma_X^T\check{\varepsilon} - \mu n \tilde{\beta}_* 
    \end{bmatrix}
    = \begin{bmatrix} \Phi(k+1; \mu, \Lambda) & \sum_{i=1}^{k+1}\left(\prod_{j=i+1}^{k+1} \left[(1-\eta_{j}\mu)I - \frac{\eta_{j}}{n}\Lambda\right]\right)\frac{\eta_{i}}{n}I \\ 0 & I \end{bmatrix} \begin{bmatrix} \tilde{\beta}_{0} - \tilde{\beta}_* \\ \Sigma_X^T\check{\varepsilon} - \mu n \tilde{\beta}_*
    \end{bmatrix}.
\]
We can rewrite the sum in the top right block as follows. For the $\ell$th diagonal entry, writing $a_\ell := \mu + n^{-1}\Lambda_\ell$ for brevity, we have
\begin{align}\label{eq:phis}
    \frac{1}{n}\sum_{i=1}^{k+1}\eta_i\left(\prod_{j=i+1}^{k+1} \left[1 - \eta_{j}(\mu + n^{-1}\Lambda_{\ell})\right]\right)
    &= \frac{1}{n}\sum_{i=1}^{k+1}\eta_i\left(\frac{\prod_{j=1}^{k+1} \left[1 - \eta_{j}(\mu + n^{-1}\Lambda_{\ell})\right]}{\prod_{j=1}^{i} \left[1 - \eta_{j}(\mu + n^{-1}\Lambda_{\ell}) \right]}\right) \nonumber\\
    &= \frac{1}{n}\sum_{i=1}^{k+1}\eta_i\left(\frac{\phi(k+1; a_\ell)}{\phi(i; a_\ell)}\right).
\end{align}
If $a_{\ell} \ne 0$, then using the identity from Lemma~\ref{lem:sum}, we have
\begin{align*}
    \frac{1}{n}\sum_{i=1}^{k+1}\eta_i\left(\frac{\phi(k+1; a_\ell)}{\phi(i; a_\ell)}\right)
    &= \frac{1}{n} \frac{1}{a_\ell}\left(1 - \frac{\phi(k+1; a_\ell)}{\phi(0; a_\ell)}\right).
\end{align*}
If $a_{\ell} = 0$, then $\mu = \Lambda_{\ell} = (\Sigma_X^T \Sigma_X)_\ell = 0$, and the corresponding entry of the vector $(\Sigma_X^T\check{\varepsilon} - \mu n \tilde{\beta}_*)_{\ell}$ is also zero.
Thus, we obtain
\[
    \begin{bmatrix} \tilde{\beta}_{k+1} - \tilde{\beta}_* \\ \Sigma_X^T\check{\varepsilon} - \mu n \tilde{\beta}_* 
    \end{bmatrix} = \begin{bmatrix} \Phi(k+1; \mu, \Lambda) & \left(\mu n I + \Lambda\right)^{\dagger}(I - \Phi(k+1; \mu, \Lambda))\\ 0 & I \end{bmatrix} \begin{bmatrix} \tilde{\beta}_{0} - \tilde{\beta}_* \\ \Sigma_X^T\check{\varepsilon} - \mu n \tilde{\beta}_*
    \end{bmatrix}.
\]
This implies that~\eqref{eq:betak_2} holds.

Now, we shall show how this implies~\eqref{eq:betak_1}.
Since $\Lambda$ and $\Phi(k; \mu)$ are diagonal, we have
\begin{align*}
    \tilde{\beta}_k
    &\overset{\eqref{eq:betak_2}}{=} \Phi(k; \mu)(\tilde{\beta}_0 - \tilde{\beta}_*) + (I - \Phi(k; \mu)) (\mu n I + \Lambda)^{\dagger} (\Sigma_X^T \check{\varepsilon} - \mu n \tilde{\beta}_*) +  \tilde{\beta}_*\\
    &= \Phi(k; \mu)(\tilde{\beta}_0 - \tilde{\beta}_*) + (I - \Phi(k; \mu)) (\mu n I + \Lambda)^{\dagger} (\Sigma_X^T \check{\varepsilon} + \Lambda \tilde{\beta}_*) \\
    &\quad\quad\quad\quad\quad\quad\quad\quad- (I - \Phi(k; \mu)) (\mu n I + \Lambda)^{\dagger} (\mu n I + \Lambda) \tilde{\beta}_* + \tilde{\beta}_* \\
    &= \Phi(k; \mu) \tilde{\beta}_0 + (I - \Phi(k; \mu)) (\mu n I + \Lambda)^{\dagger} (\Sigma_X^T \check{\varepsilon} + \Lambda \tilde{\beta}_*).
\end{align*}
The last step holds since $\Lambda$ is a diagonal matrix that is non-zero in its $r \times r$ leading principal submatrix, where $\mathrm{rank}(X) = r$, and $\Phi(k; \mu)$ is diagonal with diagonal entries $\Phi_j(k; \mu) = 1$ for $j > r$, so that
\[
    (I - \Phi(k; \mu)) (\mu n I + \Lambda)^{\dagger} (\mu n I + \Lambda) = I - \Phi(k; \mu).
\]
By multiplying by $V$ to change back to the original basis, recalling that $V \Lambda V^T = X^T X$, we have
\[
    \beta_k
    = V \Phi(k; \mu) V^T \beta_0 + (I - V \Phi(k; \mu) V^T) (\mu n I + X^T X)^{\dagger} X^T(X \beta_* + \varepsilon).
\]
Since $y = X\beta_* + \varepsilon$, this implies~\eqref{eq:betak_1}. This completes the proof.
\end{proof}

\subsection{Proof of Proposition~\ref{prop:gen-risk-betak-mu}}

\begin{proof*}
From Equation~\eqref{eq:betak_2} of Proposition~\ref{prop:betak}, we have that
\begin{align} 
    \tilde{\beta}_k - \tilde{\beta}_*
    &= \Phi(k; \mu)(\tilde{\beta}_0 - \tilde{\beta}_*) + (\mu n I + \Lambda)^{\dagger}(I - \Phi(k; \mu))(\Sigma_X^T \check{\varepsilon} - \mu n \tilde{\beta}_*) \nonumber\\
    &= \underbrace{\Phi(k; \mu)(\tilde{\beta}_0 - \tilde{\beta}_*) - \mu n (\mu n I + \Lambda)^{\dagger}  (I - \Phi(k; \mu)) \tilde{\beta}_*}_{=: A} + \underbrace{(\mu n I + \Lambda)^{\dagger} (I - \Phi(k; \mu)) \Sigma_X^T U^T \varepsilon}_{=: B}. \label{eq:gen-risk-betak-mu_1}
\end{align}
Our goal is to compute the expected excess risk:
\begin{align*}
    R(\beta_k)
    = \mathbb{E}_\varepsilon[\mathcal{R}(\beta_k) \mid X]
    = \mathbb{E}_\varepsilon \left[ \left\| \Sigma^{1/2} (\beta_k - \beta_*) \right\|_2^2 \mid X \right]
    = \mathbb{E}_\varepsilon \left[ \left\| \Sigma^{1/2} V \left( \tilde{\beta}_k - \tilde{\beta}_* \right) \right\|_2^2 \mid X \right].
\end{align*}
By substituting~\eqref{eq:gen-risk-betak-mu_1} and expanding the square, the expected squared norm can be written
\begin{equation} \label{eq:gen-risk-betak-mu_2}
\begin{aligned}
    \mathbb{E}_\varepsilon \left[ \left\| \Sigma^{1/2} V (\tilde{\beta}_k - \tilde{\beta}_*) \right\|_2^2 \mid X \right]
    &= \mathbb{E}_{\varepsilon} \left[ \left\| \Sigma^{1/2} V A \right\|_2^2 \mid X \right]
    + \mathbb{E}_\varepsilon \left[ \left\| \Sigma^{1/2} V B \right\|_2^2 \mid X \right]
    + 2 \mathbb{E}_\varepsilon \left[ A^T V^T \Sigma V B \mid X \right].
\end{aligned}
\end{equation}
Recall that $X = U \Sigma_X V^T$ and $\Lambda = \Sigma_X^T \Sigma_X$ is diagonal. Thus, $V \Lambda V^T = X^T X$, and from~\eqref{eq:defn_Phi_matrix2},
\[
    V \Phi(k; \mu) V^T = \prod_{i=1}^k (I - \eta_i (\mu I + n^{-1} X^T X)).
\]
The significance of this observation is that $V \Phi(k; \mu) V^T$ is a function of the feature matrix $X$. Hence, using the fact that $V$ is orthogonal (i.e., $V^T V = V V^T = I$), we can write
\begin{align}
    V A &= [V \Phi(k; \mu) V^T] (\beta_0 - \beta_*) - \mu n [V (\mu n I + \Lambda)^{\dagger} V^T] [V (I - \Phi(k; \mu)) V^T] \beta_* \nonumber \\
    &= [V \Phi(k; \mu) V^T] (\beta_0 - \beta_*) - \mu n (\mu n I + X^T X)^{\dagger} (I - V \Phi(k; \mu) V^T) \beta_* \label{eq:gen-risk-betak-mu_VA}
\end{align}
Hence, for the first term of~\eqref{eq:gen-risk-betak-mu_2}, we have $\mathbb{E}_{\varepsilon}\left[ \| \Sigma^{1/2} V A \|_2^2 \mid X \right] = \| \Sigma^{1/2} V A \|_2^2$ since $\Sigma^{1/2} VA$ is a function of $X$.
Similarly, to compute the second term of~\eqref{eq:gen-risk-betak-mu_2}, we can write
\begin{align}
    VB &= [V(\mu n I + \Lambda)^{\dagger} V^T][V (I - \Phi(k; \mu)) V^T] [V \Sigma_X^T U^T] \varepsilon \nonumber \\
    &= (\mu n I + X^T X)^{\dagger} (I - V \Phi(k;\mu) V^T) X^T \varepsilon. \label{eq:gen-risk-betak-mu_VB}
\end{align}
Also,
\begin{equation} \label{eq:gen-risk-betak-mu_VB2}
    V B B^T V^T = (\mu n I + X^T X)^{\dagger} (I - V \Phi(k;\mu) V^T) X^T \varepsilon \varepsilon^T X (I - V \Phi(k;\mu) V^T) (\mu n I + X^T X)^{\dagger}.
\end{equation}
Since the residual has covariance matrix $\mathbb{E}_{\varepsilon}[ \varepsilon \varepsilon^T \mid X] = \tau^2 I$ by Assumption~\ref{assumption:noise}, it follows from using the linearity of expectation and~\eqref{eq:gen-risk-betak-mu_VB2} that
\begin{align*}
    &\mathbb{E}[ \mathrm{Tr}(\Sigma V B B^T V^T) \mid X ] \\
    &\quad\quad= \mathrm{Tr}( \Sigma (\mu n I + X^T X)^{\dagger} (I - V \Phi(k;\mu) V^T) X^T \mathbb{E}\left[ \varepsilon \varepsilon^T \mid X \right] X (I - V \Phi(k;\mu) V^T) (\mu n I + X^T X)^{\dagger} ) \\
    &\quad\quad= \tau^2 \mathrm{Tr}( \Sigma (\mu n I + X^T X)^{\dagger} (I - V \Phi(k;\mu) V^T) X^T X (I - V \Phi(k;\mu) V^T) (\mu n I + X^T X)^{\dagger} ) \\
    &\quad\quad= \tau^2 \mathrm{Tr}( \Sigma V (\mu n I + \Lambda)^{\dagger} (I - \Phi(k;\mu)) \Lambda (I - \Phi(k;\mu)) (\mu n I + \Lambda)^{\dagger} V^T).
\end{align*}
Hence, by using the cyclic property of trace, the second term of~\eqref{eq:gen-risk-betak-mu_2} is equal to
\begin{align*}
    \mathbb{E}[\| \Sigma^{1/2} V B \|_2^2 \mid X]
    &= \mathbb{E}[ \mathrm{Tr}( B^T V^T \Sigma V B ) \mid X ]
    = \mathbb{E}[ \mathrm{Tr}(\Sigma V B B^T V^T) \mid X ] \\
    &= \tau^2 \mathrm{Tr}( \Sigma^{1/2} V (\mu n I + \Lambda)^{\dagger} (I - \Phi(k;\mu)) \Sigma_X^T \Sigma_X (I - \Phi(k;\mu)) (\mu n I + \Lambda)^{\dagger} V^T \Sigma^{1/2}) \\
    &= \tau^2 \left\| \Sigma^{1/2} V (\mu n I + \Lambda)^{\dagger}  (I - \Phi(k; \mu)) \Sigma_X^T \right\|_F^2.
\end{align*}
Finally, for the third term of~\eqref{eq:gen-risk-betak-mu_2}, using the observation~\eqref{eq:gen-risk-betak-mu_VA} that $VA$ is a function of $X$, and the expression~\eqref{eq:gen-risk-betak-mu_VB} for $VB$, we have
\begin{align*}
    \mathbb{E}_\varepsilon \left[ A^T V^T \Sigma V B \mid X \right]
    &= \mathbb{E}_\varepsilon \left[ A^T V^T \Sigma (\mu n I + X^T X)^{\dagger} (I - V \Phi(k;\mu) V^T) X^T \varepsilon \mid X \right] \\
    &= A^T V^T \Sigma (\mu n I + X^T X)^{\dagger} (I - V \Phi(k;\mu) V^T) X^T \mathbb{E}_\varepsilon \left[ \varepsilon \mid X \right] = 0,
\end{align*}
since $\mathbb{E}_{\varepsilon}[\varepsilon \mid X] = 0$ by Assumption~\ref{assumption:noise}. Thus, the cross-term vanishes, and we conclude that $R(\beta_k) = \| \Sigma^{1/2} VA \|_2^2 + \mathbb{E}[\| \Sigma^{1/2} V B \|_2^2 \mid X]$, which leads to the claimed result.
\end{proof*}

\subsection{Proof of Theorem~\ref{thm:earlystopping-mu}}

\begin{proof*}
Let $P = V^T \Sigma V$ as in the proof of Theorem~\ref{thm:earlystopping}. From Proposition~\ref{prop:gen-risk-betak-mu}, the expected excess risk after $k$ iterations is given by
\begin{equation} \label{eq:earlystopping-mu-pf:1}
    R(\beta_k) = \gamma^T P \gamma + \tau^2 \Tr\left( P \left[(\mu n I + \Lambda)^\dagger\right]^2 (I - \Phi(k; \mu))^2 \Lambda\right),
\end{equation}
where 
\[
    \gamma := \left[ \Phi(k; \mu)(- \tilde{\beta}_*) - \mu n (\mu n I + \Lambda)^\dagger (I - \Phi(k; \mu)) \tilde{\beta}_* \right].
\]
By expanding the inner product, we can write the first term of~\eqref{eq:earlystopping-mu-pf:1} as the sum of the following three terms:
\begin{align*}
    \gamma^T P \gamma
    &= \left[\Phi(k; \mu)(- \tilde{\beta}_*)\right]^T P \Phi(k; \mu)(- \tilde{\beta}_*) \\
    &\quad + \left[\mu n (\mu n I + \Lambda)^\dagger (I - \Phi(k; \mu)) \tilde{\beta}_*\right]^T P \mu n (\mu n I + \Lambda)^\dagger (I - \Phi(k; \mu)) \tilde{\beta}_* \\
    &\quad -2\left[\mu n (\mu n I + \Lambda)^\dagger (I - \Phi(k; \mu)) \tilde{\beta}_*\right]^T P\Phi(k; \mu)( - \tilde{\beta}_*).
\end{align*}
Taking expectations with respect to $\beta_*$ under Assumption~\ref{assumption:spherical}, we see that the first summand becomes
\[
    \sum_{j=1}^{p} P_{jj} \cdot \sigma^2 \Phi_j(k)^2,
\]
where we denote the $j$th diagonal entry of $\Phi(k;\mu)$ by the shorthand $\Phi_j(k) = \phi(k;\mu + n^{-1} \Lambda_j) / \phi(0;\mu + n^{-1} \Lambda_j)$.
Similarly, the second summand becomes
\[
     \sum_{j=1}^{p} P_{jj}\cdot \sigma^2 \frac{\mu^2 n^2}{(\mu n + \Lambda_j)^2} (1 - \Phi_j(k))^2,
\]
and the third summand becomes
\[
    2 \sum_{j=1}^{p} P_{jj} \cdot \sigma^2 \frac{\mu n}{\mu n + \Lambda_j}\Phi_j(k) (1-\Phi_j(k)).
\]
Next, we can write the second term of~\eqref{eq:earlystopping-mu-pf:1} as the following:
\[
    \tau^2 \Tr\left( P \left[(\mu n I + \Lambda)^\dagger\right]^2 (I - \Phi(k; \mu))^2 \Lambda\right) = \sum_{j=1}^{p} P_{jj} \cdot \frac{\tau^2 \Lambda_j}{(\mu n + \Lambda_j)^2}(1-\Phi_j(k))^2.
\]
Thus, summing the displayed expressions above, we have shown that the Bayes excess risk is given by
\[
    R_B(\beta_k) =  \sum_{j=1}^{p} P_{jj} \left[\Phi_j(k)^2 \sigma^2 + \left(\frac{\mu^2 n^2 \sigma^2 + \tau^2 \Lambda_j}{(\mu n + \Lambda_j)^2}\right)(1 - \Phi_j(k))^2 + 2 \sigma^2 \frac{\mu n}{\mu n + \Lambda_j} \Phi_j(k) (1 - \Phi_j(k)) \right].
\]
To analyze the benefit of early stopping, we compute the derivative with respect to $k$:
\begin{align*}
    \partial_k R_B(\beta_k) = 2 \sum_{j=1}^{p} P_{jj} \cdot \partial_k \Phi_j(k) \left[ \Phi_j(k)\sigma^2 - \left(\frac{\mu^2 n^2 \sigma^2 +  \tau^2\Lambda_j}{(\mu n + \Lambda_j)^2}\right)(1-\Phi_j(k)) + \sigma^2\frac{\mu n}{\mu n + \Lambda_j} (1 - 2\Phi_j(k)) \right].
\end{align*}
Note that with a ridge regularization parameter $\mu > 0$, $\Phi_j(k)$ is not necessarily constant for $j > r$, where $r = \mathrm{rank}(X)$.
Since $\partial_k \Phi_j(k) \leq 0$, the sign of $\partial_k R_B(\beta_k)$ depends on the sign of the expression inside the brackets:
\[
   \Phi_j(k)\sigma^2 - \left(\frac{\mu^2 n^2 \sigma^2 +  \tau^2\Lambda_j}{(\mu n + \Lambda_j)^2}\right)(1-\Phi_j(k)) + \sigma^2\frac{\mu n}{\mu n + \Lambda_j} (1 - 2\Phi_j(k)).
\]
Simplifying, the coefficient of the terms with $\Phi_j(k)$ is given by
\begin{align*}
    \sigma^2\left[1 + \frac{\mu^2 n^2 }{(\mu n + \Lambda_j)^2} - 2\frac{\mu n}{\mu n + \Lambda_j} \right] + \frac{ \tau^2\Lambda_j}{(\mu n + \Lambda_j)^2}
    &= \sigma^2 \frac{\Lambda_j^2}{(\mu n + \Lambda_j)^2} + \frac{ \tau^2\Lambda_j}{(\mu n + \Lambda_j)^2} \\
    &= \Lambda_j \cdot \frac{\sigma^2 \Lambda_j + \tau^2}{(\mu n + \Lambda_j)^2},
\end{align*}
and the coefficient for the remaining terms without $\Phi_j(k)$ is given by
\begin{align*}
    - \frac{\mu^2 n^2 \sigma^2 + \tau^2\Lambda_j}{(\mu n + \Lambda_j)^2}+ \sigma^2\frac{\mu n}{\mu n + \Lambda_j}
    &= \Lambda_j \cdot \frac{\sigma^2 \mu n - \tau^2}{(\mu n + \Lambda_j)^2}.
\end{align*}
Thus, noting that $\Lambda_j = 0$ for $j > r$, we have shown that
\begin{align}
    \partial_k R_B(\beta_k)
    &= 2 \sum_{j=1}^{p} P_{jj} \cdot \partial_k \Phi_j(k) \cdot \frac{\Lambda_j}{(\mu n + \Lambda_j)^2} \left[\Phi_j(k) \cdot (\sigma^2 \Lambda_j + \tau^2) + (\sigma^2 \mu n - \tau^2) \right] \nonumber\\
    &= 2 \sum_{j=1}^{r} P_{jj} \cdot \partial_k \Phi_j(k) \cdot \frac{\Lambda_j}{(\mu n + \Lambda_j)^2} \left[\Phi_j(k) \cdot (\sigma^2 \Lambda_j + \tau^2) + (\sigma^2 \mu n - \tau^2) \right]. \label{eq:risk-derivative}
\end{align}
Observe that for each $j$, solving for the value of $\Phi_j(k)$ for which the square bracket is zero yields
\[
    \Phi_j(k) = \frac{\tau^2 - \sigma^2 \mu n}{\tau^2 + \sigma^2 \Lambda_j}.
\]
Therefore, if the condition~\eqref{eq:earlystopping-mu_condition1} holds for all $j = 1, \ldots, r$, then the gradient is eventually positive. That is, for sufficiently large $k$, $\partial_k R_B(\beta_k) \geq 0$, which implies that early stopping is beneficial.
Conversely, if~\eqref{eq:earlystopping-mu_condition2} holds for all $j = 1, \ldots, r$, then we deduce that $\partial_k R_B(\beta_k) \leq 0$ for all $k$, which implies that early stopping is not beneficial.
\end{proof*}

\section{Properties of the \texorpdfstring{$q$}{q}-Pochhammer symbol}
\label{app:q-pochh}

In this section, our goal is to prove some facts about the $q$-Pochhammer symbol $$(a; q)_n = \prod_{i=0}^{n-1} (1 - aq^i).$$ In particular, we want to justify being able to take derivatives in $n$ when analyzing the risk of gradient descent with an exponentially decaying learning rate schedule (see Proposition~\ref{prop:exponential} and Assumption~\ref{assumption:differentiable-extension}).
Since $(a; q)_n$ is only defined for integer $n$ so far, we first need to find an extension to the set of real numbers. Throughout this section, we shall assume that $|q| < 1$.

First, observe that we can write
\begin{equation}
    (a;q)_x = \frac{(a;q)_{\infty}}{(aq^x;q)_\infty}.
\end{equation}
Here we see that the right hand side is defined for all $x \in \mathbb{R}$. Note that this is closely connected to the following known identity for the $q$-Gamma function (e.g., \cite{koekoek1996askey}):
\[
    \Gamma_q(x) = \frac{(q;q)_{\infty}}{(q^x;q)_\infty} (1-q)^{1-x}.
\]
The main result of this section is the following, which shows that this extension of the $q$-Pochhammer symbol to the reals is differentiable.

\begin{proposition} \label{prop:ratio-limit}
Let $|q| < 1$. The function
\[
    x \mapsto (a; q)_x := \frac{(a; q)_\infty}{(a q^x; q)_\infty}
\]
is differentiable in $x$, and the derivative is given by
\[
        \partial_x (a; q)_x = \frac{(a;q)_\infty}{(aq^x;q)_\infty^2} aq^{x}\log(q)\sum_{j=0}^\infty \frac{q^j}{1-aq^{x+j}} (aq^x;q)_\infty.
\]
Moreover, for all $m \in \mathbb{N}$ we have that 
\[
    \lim_{k \to \infty} \frac{\partial_x (a; q)_x |_{x = k}}{\partial_x (a; q)_x |_{x = k+m}} = \frac{1}{q^m}.
\]
\end{proposition}

We shall build towards the proof of Proposition~\ref{prop:ratio-limit} by stating and proving some technical lemmas.

\begin{lemma} \label{lemma:poch-cts}
For $|q| < 1$, the function $a \mapsto (a;q)_\infty$ is continuous.
\end{lemma}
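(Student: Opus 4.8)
The plan is to realize $(a;q)_\infty = \prod_{i=0}^\infty (1-aq^i)$ as a locally uniform limit of polynomials and then invoke the fact that a uniform limit of continuous functions is continuous. Writing the partial products $P_n(a) := (a;q)_n = \prod_{i=0}^{n-1}(1-aq^i)$, each $P_n$ is a polynomial in $a$ and hence continuous, so it suffices to show that $P_n \to (a;q)_\infty$ uniformly on every closed ball $\{|a| \le R\}$ (the argument is identical whether $a$ ranges over $\mathbb{R}$ or $\mathbb{C}$). Continuity is local, so uniform convergence on each such ball yields continuity everywhere.

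First I would record the summability estimate that drives the whole argument, and which is the only place the hypothesis $|q|<1$ enters: for $|a|\le R$,
\[
    \sum_{i=0}^\infty |a|\,|q|^i \le R\sum_{i=0}^\infty |q|^i = \frac{R}{1-|q|} =: M_R < \infty ,
\]
with the bound independent of $a$ on the ball. Next I would convert this into uniform convergence of the product via the elementary inequalities $|1-aq^i| \le 1+|a||q|^i$ and $\bigl|\prod_i(1+w_i)-1\bigr| \le \prod_i(1+|w_i|)-1 \le \exp\!\bigl(\sum_i |w_i|\bigr)-1$. For $m<n$, factoring out the head gives $P_n(a) - P_m(a) = P_m(a)\bigl(\prod_{i=m}^{n-1}(1-aq^i)-1\bigr)$, and I would bound
\[
    |P_m(a)| \le \exp\!\Big(\sum_{i\ge 0} R|q|^i\Big) = e^{M_R},
    \qquad
    \Big|\prod_{i=m}^{n-1}(1-aq^i)-1\Big| \le \exp\!\Big(R\sum_{i\ge m}|q|^i\Big)-1 .
\]
Since the geometric tail $\sum_{i\ge m}|q|^i \to 0$ as $m\to\infty$, the product of these two bounds tends to $0$ uniformly in $a$ on $\{|a|\le R\}$, so $(P_n)$ is uniformly Cauchy and therefore converges uniformly there.

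Finally, the uniform limit of the continuous polynomials $P_n$ is continuous on each ball $\{|a|\le R\}$, and since every point lies in such a ball, $a\mapsto(a;q)_\infty$ is continuous. I expect the main obstacle to be exactly the product-convergence estimate in the second step: one must control the tail product \emph{uniformly} in $a$, and the clean way is the exponential majorization above rather than a termwise logarithm argument (taking logs would require tracking where the factors $1-aq^i$ vanish). The summability bound and the limit-of-continuous-functions conclusion are then routine.
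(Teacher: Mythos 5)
Your proof is correct, but it takes a genuinely different route from the paper's. The paper fixes $a_0$ and runs a direct $\epsilon$--$\delta$ estimate, telescoping in the variable $a$: it bounds $|(a;q)_n - (a_0;q)_n|$ by swapping one factor $(1-aq^k)$ for $(1-a_0q^k)$ at a time, which produces the bound $\delta M \sum_{k=0}^{n-1}|q|^k$ with $M$ a uniform bound on windowed partial products, and then lets $n \to \infty$. You instead telescope in the index $n$: you show the partial products $P_n$ are uniformly Cauchy on each ball $\{|a| \le R\}$ via the exponential majorization $|P_m(a)|\le e^{M_R}$ and $\bigl|\prod_{i=m}^{n-1}(1-aq^i)-1\bigr| \le \exp\bigl(R\sum_{i\ge m}|q|^i\bigr)-1$, and then invoke the uniform-limit-of-continuous-functions theorem. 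Each approach has something to offer. The paper's argument yields an explicit local Lipschitz-type modulus of continuity, $|(a;q)_\infty - (a_0;q)_\infty| \le \delta M/(1-|q|)$, though its constant $M$ is defined somewhat delicately (a supremum over all windowed products, involving the varying point $a$ itself), which your exponential bound avoids entirely. More notably, your argument proves locally uniform convergence of $(a;q)_n \to (a;q)_\infty$ as an intermediate step --- which is precisely the content of the paper's subsequent Lemma~\ref{lemma:locally-uniform}. The paper proves continuity first and then uses it (continuity at $0$) to deduce locally uniform convergence; your single self-contained argument would subsume both lemmas and shorten that section.
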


\begin{proof}[Proof.]
Let $\varepsilon > 0$. We shall show continuity at $a_0$. Fix a number $\delta > 0$, which we shall choose later, and let $a$ be any point such that $|a - a_0 | < \delta$. We want to choose $\delta$ such that $|(a;q)_\infty - (a_0;q)_\infty| < \varepsilon$.
To do so, let us define 
\[
    M := \max\left\{ \left(\max_{i,j = 1, \ldots, \infty} \prod_{k=i}^j| (1-aq^k)|\right) \cdot \left(\max_{i,j = 1, \ldots, \infty} \prod_{k=i}^j |(1-a_0q^k)|\right), 1 \right\}.
\]
This is finite because $|q| < 1$. Thus, only finitely many $(1-aq^k)$, $(1-a_0q^k)$ have magnitude greater than one. We claim that for all $n$, if $|a-a_0| < \delta$, then
\begin{equation} \label{eq:poch-cts_1}
    |(a;q)_n - (a_0;q)_n| \le \delta M \sum_{k=0}^{n-1} |q|^k.
\end{equation}
Assuming that this is true for now, then by taking the limit as $n \to \infty$, we obtain 
\[
    |(a;q)_\infty - (a_0;q)_\infty| \le \frac{\delta M }{1-|q|}. 
\]
Hence, choosing $\delta < (1 - |q|) \varepsilon / M$
completes the proof. Finally, we shall now prove the claim~\eqref{eq:poch-cts_1}. Note that  
\begin{align*}
    &|(a;q)_{n+1}- (a_0;q)_{n+1} | \\
    &\quad= |(a;q)_{n}(1-aq^n) - (a_0;q)_n (1-a_0q^n) - (a_0;q)_n(1-aq^n) + (a_0;q)_n(1-aq^n)| \\
    &\quad\le |1-aq^n||(a;q)_n - (a_0;q)_n| + |(a_0;q)_n||a-a_0||q|^n \\
    &\quad\le |1-aq^n||(a;q)_n - (a_0;q)_n| + \delta |(a_0;q)_n||q|^n \\
    &\quad\le |1-aq^n|\Big[ |1-aq^{n-1}| |(a;q)_{n-1} - (a_0;q)_{n-1}| + \delta |(a_0;q)_{n-1}| |q|^{n-1} \Big] + \delta |(a_0;q)_n||q|^n \\
    &\quad\le \ldots \le \delta \sum_{k=0}^n c_k |q|^k,
\end{align*}
where each $c_k$ is of the form
\[
    c_k = \left(\prod_{\ell=i(k)}^{j(k)}| (1-aq^\ell)|\right) \cdot \left(\prod_{\ell=i_0(k)}^{j_0(k)} |(1-a_0q^\ell)|\right).
\]
Thus, we see that for all $k$, $c_k \le M$, which implies the claimed result.
\end{proof}

Now that we have continuity, we want to bootstrap this to get differentiability. To do this, we shall need to prove the following lemma:

\begin{lemma} \label{lemma:locally-uniform}
Let $|q| < 1$. Then for all $a$, $(a;q)_n \to (a;q)_\infty$ locally uniformly as $n \to \infty$.
\end{lemma}

\begin{proof}[Proof.]
Let $a_0$ be any point and $\varepsilon > 0$. We shall prove uniform convergence in the closed ball $B(a_0, \varepsilon) := \{ a : |a - a_0| \le \varepsilon \}$.
Let
\[
    M = \max_{a \in B(a_0, \varepsilon)} |a|.
\]
For any $a \in B(a_0,\varepsilon)$, we see that 
\[
    |(a;q)_n - (a;q)_\infty| = |(a;q)_n||1-(aq^n;q)_\infty|.
\]
By Lemma \ref{lemma:poch-cts}, the function $a \mapsto (a; q)_\infty$ is continuous at zero. Thus, there exists a number $\delta > 0$ such that if $|\tilde{a}| < \delta$, we have that 
\[
    |1 - (\tilde{a};q)_\infty| \le \frac{\varepsilon}{\max_k |(a;q)_k|}.
\]
Note that $\max_k (a;q)_k$ is again finite since $|q| < 1$ and $|a| \le M$.
Thus, by choosing $N$ such that $aq^N < \delta$, which is possible since $|a| \le M$, we obtain
\[
    |(a;q)_n||1-(aq^n;q)_\infty| \le |(a;q)_n| \, \frac{\varepsilon}{\max_k |(a;q)_k|} \le \varepsilon
\]
for all $n \geq N$. Thus, we have locally uniform convergence. 
\end{proof}

\begin{lemma} \label{lemma:derivative}
If $|q| < 1$, then 
\[
    \partial_x (aq^x;q)_\infty = -aq^{x}\log(q)\sum_{k=0}^\infty \frac{q^k}{1-aq^{x+k}} (aq^x;q)_\infty.
\]
\end{lemma}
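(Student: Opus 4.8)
The plan is to reduce the infinite product to its finite partial products $(aq^x;q)_n = \prod_{k=0}^{n-1}(1 - aq^{x+k})$, differentiate those by elementary logarithmic differentiation, and then pass to the limit $n \to \infty$ using the locally uniform convergence already established in Lemma \ref{lemma:locally-uniform}. I work at points $x$ where no factor $1 - aq^{x+k}$ vanishes, which is where the claimed formula is meaningful. First I would observe that each factor $1 - aq^{x+k} = 1 - a\,e^{(x+k)\log q}$ is a smooth (indeed entire) function of $x$ with $\partial_x(1 - aq^{x+k}) = -a q^{x+k}\log(q)$, so the ordinary product rule applied to the finite product $(aq^x;q)_n$ gives
\[
    \partial_x (aq^x;q)_n = -a q^{x}\log(q)\left(\sum_{k=0}^{n-1}\frac{q^{k}}{1 - aq^{x+k}}\right)(aq^x;q)_n.
\]
This is precisely the target identity with $\infty$ replaced by $n$, so only the limit $n \to \infty$ remains.

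Second, I would verify that each piece on the right converges appropriately. By Lemma \ref{lemma:locally-uniform}, $(aq^x;q)_n \to (aq^x;q)_\infty$ locally uniformly in $x$. For the series, since $|q| < 1$ the quantities $aq^{x+k} \to 0$ as $k \to \infty$, so on any compact set of $x$-values avoiding the discrete set of zeros one can bound $|1 - aq^{x+k}|$ below by a fixed positive constant for all sufficiently large $k$. Combined with the geometric decay of $q^{k}$, the Weierstrass $M$-test gives absolute and locally uniform convergence of $\sum_{k=0}^\infty \frac{q^{k}}{1 - aq^{x+k}}$. Hence the entire right-hand side of the finite-$n$ identity converges locally uniformly as $n \to \infty$, i.e., the sequence of derivatives $\partial_x (aq^x;q)_n$ converges locally uniformly.

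Third, I would invoke the standard analysis theorem on differentiating limits: if differentiable functions $g_n(x) = (aq^x;q)_n$ converge pointwise and their derivatives $g_n'(x)$ converge locally uniformly, then the limit $g(x) = (aq^x;q)_\infty$ is differentiable with $g'(x) = \lim_n g_n'(x)$. Feeding in the pointwise convergence from Lemma \ref{lemma:locally-uniform} and the locally uniform convergence of the derivatives from the previous step yields
\[
    \partial_x (aq^x;q)_\infty = -a q^{x}\log(q)\left(\sum_{k=0}^\infty \frac{q^{k}}{1 - aq^{x+k}}\right)(aq^x;q)_\infty,
\]
which is the claim.

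The main obstacle is the second step: securing locally uniform convergence of the derivative series, which requires bounding the denominators $1 - aq^{x+k}$ away from zero uniformly over a compact set of $x$ and excluding the poles where a factor vanishes. The continuity facts from Lemma \ref{lemma:poch-cts} and the geometric smallness of $q^k$ do the work here. Once the derivatives are shown to converge locally uniformly, the exchange of limit and derivative is a black-box fact, and the finite-product differentiation in the first step is entirely elementary.
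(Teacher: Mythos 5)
Your proof is correct and takes essentially the same route as the paper's: term-by-term differentiation of the product, justified by the locally uniform convergence established in Lemma \ref{lemma:locally-uniform}. The only difference is one of rigor, not substance --- where the paper simply invokes ``the infinite product rule'' and computes, you differentiate the partial products $(aq^x;q)_n$ and pass to the limit via the classical theorem on interchanging limits and derivatives, which is precisely the justification the paper's one-line invocation leaves implicit.
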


\begin{proof}[Proof.]
Since we have locally uniform convergence from Lemma~\ref{lemma:locally-uniform}, we can use the formula for the derivative of an infinite product of analytic functions to obtain
\begin{align*}
    \partial_x(aq^x;q)_\infty &= \sum_{k=0}^\infty \partial_x(1-aq^{x+k}) \prod_{j \neq k}(1-aq^{x+j}) \\
    &= -\sum_{k=0}^\infty aq^{x}\log(q)q^k \prod_{j \neq k}(1-aq^{x+j}) \\
    &= -aq^{x}\log(q)\sum_{k=0}^\infty q^k \prod_{j \neq k}(1-aq^{x+j})\\
    &= -aq^{x}\log(q)\sum_{k=0}^\infty  \frac{q^k}{(1-aq^{x+k})}  (aq^x;q)_\infty.
    \qedhere
\end{align*}
\end{proof}

\begin{proof}[Proof of Proposition~\ref{prop:ratio-limit}.]
The differentiability of $(a; q)_x$ and the formula for the derivative follow from Lemma~\ref{lemma:derivative}.
Taking the ratio of the derivatives evaluated at $k$ and $k + m$, we see that 
\[
    \frac{\partial_x (a; q)_x |_{x = k}}{\partial_x (a; q)_x |_{x = k+m}} =  \frac{(aq^{k+m};q)_\infty}{(aq^k;q)_\infty} \frac{1}{q^m} \frac{\sum_{j=0}^\infty \frac{q^j}{1-aq^{k+j}}}{\sum_{j=0}^\infty \frac{q^j}{1-aq^{k+m+j}}}.
\]
We note that for each fixed $j \ge 0$,
\[
    \lim_{k\to\infty}\frac{q^j}{1-aq^{k+m+j}}=q^j,
\]
since $q^{k+m+j} \to 0$. So, for $k$ sufficiently large, $|a q^{k+m+j}| \le 1/2$, and hence
\[
    \left| \frac{q^j}{1-aq^{k+m+j}}\right| \le 2|q|^j.
\]
Since $\sum_{j=0}^{\infty}2|q|^j < \infty$, the dominated convergence theorem gives
\[
    \lim_{k\to\infty}\sum_{j=0}^{\infty} \frac{q^j}{1-aq^{k+m+j}}
    = \sum_{j=0}^{\infty} \lim_{k\to\infty} \frac{q^j}{1-aq^{k+m+j}}
    = \sum_{j=0}^{\infty}q^j
    = \frac{1}{1-q}.
\]
Additionally, we also see that 
\[
    \frac{(aq^k;q)_\infty}{(aq^{k+m};q)_\infty}
    = \frac{(aq^k;q)_\infty}{(a;q)_\infty} \frac{(a;q)_\infty}{(aq^{k+m};q)_\infty}
    = \frac{(a;q)_{k+m}}{(a;q)_{k}}
    = \prod_{j=k+1}^{k+m}(1-aq^{j-1}).
\]
If we take the limit of this expression as $k \to \infty$, we deduce that
\[
    \lim_{k \to \infty} \frac{(aq^{k+m};q)_\infty}{(aq^k;q)_\infty} = 1.
\]
Hence, by combining the individual limits above, we conclude that the ratio of the derivatives evaluated at $k$ and $k + m$ tends to $1 / q^m$ as $k \to \infty$.
\end{proof}

\end{document}